\documentclass{article}
\usepackage{kr}

\usepackage{times}
\usepackage{soul}
\usepackage{url}
\usepackage[hidelinks]{hyperref}
\usepackage[utf8]{inputenc}
\usepackage[small]{caption}
\usepackage{graphicx}
\usepackage{amsmath}
\usepackage{amsthm}
\usepackage{booktabs}
\usepackage{algorithm}
\usepackage{algorithmic}
\usepackage{latexsym}

\def\s5{{\mathbf{S5}}}

\def\calb{{\mathcal{B}}}
\def\calag{{\mathcal{A}}}

\def\calp{{\mathcal{P}}}

\def\calz{{\mathcal{Z}}}
\def\fm{{\,*_{\mbox{\scriptsize\rm fm}}\,}}
\def\ev{{\,*_{\mbox{\scriptsize\rm ev}}\,}}
\def\fb{{\,*_{\mbox{\scriptsize\rm rb}}\,}}

\def\B{{\mathbf{B}}}

\def\call{{\mathcal{L}}}

\def\Cn{{\mbox{\rm\em Cn\/}}}

\newcommand{\entails}{\ensuremath{\vdash}}

\newtheorem{lemma}{Lemma}

\newtheorem{theorem}{Theorem}

\usepackage{enumitem}

\newif\ifarxive
\arxivetrue

\ifarxive
\title{
A Study of Belief Revision  Postulates in Multi-Agent Systems (Extended Version)
}
\else
\title{
A Study of Belief Revision  Postulates in Multi-Agent Systems
}
\fi

\author{
    Michael Thielscher$^1$\and
    Tran Cao Son$^2$ \\
    \affiliations {
    $^1$University of New South Wales, Sydney, Australia\\
    $^2$New Mexico State University, Las Cruces, New Mexico, USA\\
    \emails{ 
    mit@unsw.edu.au, stran@nmsu.edu}
    }
}

\begin{document}

\maketitle

\begin{abstract}
We investigate the belief revision problem in epistemic planning, i.e., 
what will be the beliefs of all agents in a multi-agent system after an agent gains the belief in some state property.
Based on the standard representation
in epistemic planning of agents’ beliefs via a \emph{single multi-agent Kripke
model\/}, we generalize the classical AGM belief revision postulates to the multi-agent setting, with the aim to provide a formal framework for evaluating
dynamic epistemic reasoning frameworks in which the beliefs of all agents as the result of actions are computed. As an example of a simple operator that satisfies all of the generalized AGM postulates, we present generalized \emph{full-meet\/} multi-agent belief
revision. We moreover define a generalization of the standard postulates for \emph{iterated\/} revision, present a more sophisticated, event model based revision operator, and discuss the potential issues in defining an epistemic operator on Kripke
models that can satisfy all of the generalized postulates for iterated multi-agent belief revision.
\end{abstract}

\section{Introduction}

Multi-agent epistemic reasoning about actions and planning has garnered much research attention recently, as a formal framework for controlling heterogeneous, collaborative---or competitive---agents and robots. An example are service robots that interact with humans~\cite{BBMvD17}; other emerging applications for dynamic epistemic reasoning include the combination with large language models in order to plan interactions based on user beliefs~\cite{davila:logic}; model reconciliation~\cite{vaseil:dialec}; and reasoning about epistemic responsibility~\cite{chockl:respon} applied in legal contexts.

Dynamic epistemic logic (DEL) 
is considered a quasi standard as the most expressive formalism for modeling these domains~\cite{BolanderA11,buriga:episte}.
In this setting, the knowledge that different agents have, both of the environment and of each others' knowledge, is represented by a \emph{single Kripke model\/} of the possible states that the environment could be in according to the limited information of the agents. 
Such a multi-agent Kripke structure encodes all the beliefs of all agents, including those of higher order, i.e., beliefs about each other's beliefs. The latter is essential in most settings for dynamic epistemic reasoning when agents need to take into account what they know of the other (cooperating or competing) agents' knowledge or beliefs.
The use of a \emph{single\/} Kripke model to encode all beliefs of all agents in epistemic planning is motivated by efficiency\/: In order to predict the effects of a sequence of actions for the purpose of planning, it suffices to update one initial Kripke model action by action~\cite{BaralGPS22,buriga:episte}.

Actions are encoded by \emph{event models\/}, which similarly represent what the different agents know and can observe about the effects of an action. Combining one with the other, known as \emph{event model update\/}, results in a new Kripke model representing the updated knowledge after the action~\cite{boland:gentle}. 
As an example, the Kripke model on the left in the figure below represents the beliefs of two agents $a$ and~$b$ about the status of a coin ($h$: heads up); the event model in the middle (with colored arrows) encodes the pulic announcement of $\neg h$; and the rightmost model represents the beliefs of $a,b$ after the announcement\/:
\begin{center}
    \includegraphics[width=0.472\textwidth]{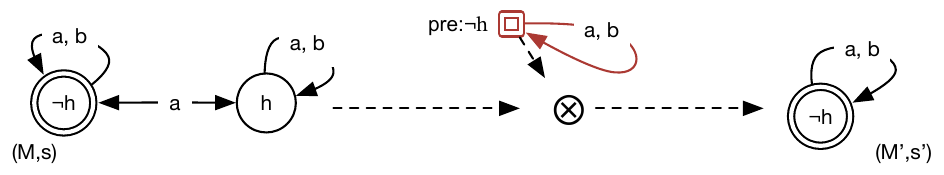}
\end{center}
Since event models can be complex to specify directly because they represent all possible views on an action occurrence, several high-level languages for describing actions in epistemic multi-agent environments have been developed~\cite{BaralGPS15,RajaratnamT21}. Multi-agent epistemic reasoning and planning is thus well-understood when agents have (incomplete) \emph{knowledge\/} of their environment; and event model update provides a clear semantics of how to update that knowledge after an action.

In reality, however, actions and sensors are often uncertain, so that agents can only hold \emph{beliefs\/}. These may be wrong, in which case they may need to be revised as a result of an action that reveals a false belief. But there is no established theory of how a Kripke structure should be updated to reflect this, and yet at the same time,
multi-agent belief revision is becoming ever more important in the area of agentic AI\/: 
in collaborative plan execution, for example, when agents have different beliefs about the precondition of actions they need to execute;  
when a chatbot tries to persuade a user to purchase a product and must be able to reason about the mental state of the user to decide on a `right' price; when
an AI system wishes to convince a human of its trustworthiness and has to reconcile its beliefs about the human's model with its own beliefs in order to provide explanations about its actions; and for
formal accounts of responsibility by reasoning about beliefs of plaintiffs and defendants.

Classical belief revision deals with the problem of identifying the beliefs of a single agent in the presence of a new piece of information.
In their seminal work, 
\citeauthor{AlchourronGM85} (\citeyear{AlchourronGM85})  
proposed the foundational properties, referred to as \emph{AGM Postulates}, that a rational belief revision operator should satisfy.
\citeauthor{DarwicheP97} (\citeyear{DarwicheP97}) added four properties for \emph{iterated\/} belief revision, commonly referred to as \emph{DP Postulates}, which focus on sequences of revisions.
The postulates have been extensively studied in the literature, including concrete revision operators 
based on the distance between models before and after a revision 
\cite{Dalal88,Satoh88}; belief revision based on possible worlds  \cite{KatsunoM92} and in the context of answer set programming   
\cite{AravanisP17,DelgrandeSTW08}; and
iterated belief revision and change for dynamic worlds \cite{BaltagS08,HunterD11,ShapiroPLL11,Peppas14,TardivoPSP21}.

Multi-agent revision has also been considered very early on; however, most of this work \cite{DragoniP94,DragoniG97,LiuW99,MalheiroJO94} focuses on maintaining the consistency of the local knowledge bases of agents in a distributed manner.
Proposals for changing the beliefs of multiple agents in dynamic environments mostly focus on belief \emph{updates\/} \cite{BaltagMS98,HerzigLM05,BenthemEK06,vBenthem07,vanDitmarschHK07,LoriniS21,BaralGPS22}. To the best of our knowledge, the only systematic study on the AGM and the DP postulates in belief revision in multi-agent environements is by
\citeauthor{aucher:genera}~(\citeyear{aucher:genera}), who considers belief sets represented by \emph{all\/} Kripke models consistent with a set of belief formulas. This allows for a straightforward application of the AGM postulates to a multi-agent setting but does not address the issue of revising a \emph{single\/} Kripke model as used in practical approaches to epistemic reasoning about actions and planning.

In this paper, \emph{we expand the logic of belief revision to the dynamic, multi-agent setting in order to provide a general, systematic framework for assessing multi-agent belief revision operators that are defined over a single Kripke model as used in dynamic epistemic reasoning about actions and planning}. To this end, we develop a suitable multi-agent generalization of the AGM postulates.
We present a simple, generalized so-called \emph{full-meet\/} multi-agent belief revision operator and prove that it satisfies the generalized AGM postulates. We also generalize the DP postulates to iterated multi-agent belief revision and show that full-meet satisfies all but one of them. We then develop a more sophisticated, event model-based belief revision operator and analyze different strategies to define such operators and discuss their consequences in satisfying the DP posulates.

\ifarxive
This is an extended version, with full proofs in the appendix, of a paper accepted at KR 2026.
\else
All omitted proofs are available in an accompanying technical report~\cite{ThielscherSon2026:arxiv}.
\fi

\section{Background}

We begin with a concise outline of the formal foundations of dynamic epistemic reasoning and belief revision

\subsubsection{Dynamic Epistemic Reasoning}

A {\em multi-agent} domain is defined by a pair $\langle \calag, \calp \rangle$ where $\calag$ is a finite and non-empty set of agents    
and $\calp$ a set of propositions.
\emph{Belief formulas} over $\langle \calag, \calp \rangle$ are defined by the  BNF: \quad  
 $
   \varphi \:\: {:}{:}{=} \:\:    p \mid \neg \varphi \mid (\varphi \wedge \varphi) \mid (\varphi \vee \varphi)  \mid \B_i\varphi
$\quad  
where $p \in \calp$ and $i \in \calag$.
Standard connectives like $\rightarrow$ and $\leftrightarrow$ are used through their usual abbreviations. A belief formula which does not contain any occurrence of $\B_i$ is referred to  
as a proposition formula. $\call_\calp$  
(resp. $\call_\calag$) denotes the set of proposition formulas over $\calp$ (resp. the set of belief formulas over $\langle \calag, \calp \rangle$).

Satisfaction of belief formulas is defined over \emph{pointed Kripke structures} (a.k.a.\ pointed Kripke models or epistemic states) \cite{FaginHMV95}. A Kripke structure/model $M$ is a tuple $\langle W, \{R_{a}\}_{a \in \calag},\pi\rangle$,  where   $W$ is a set of worlds, $\pi: W \mapsto 2^\calp$ is a function that associates an interpretation of $\calp$ to each  element of $W$, 
and  for $a \in \calag$, $R_{a} \subseteq W \times W$ is a binary relation over $W$. 
We write $R_a(u,w)$ and use this notation interchangeably with $(u,w) \in R_a$.  
For $u \in W$ and $\varphi \in \call_\calp$,  $M[\pi](u)$ and $M[\pi](u)(\varphi)$ denote the interpretation associated with $u$ via $\pi$ and the truth value of $\varphi$ with respect to $M[\pi](u)$. For a world $s \in W$, referred to as \emph{true state of the world}, $(M,s)$ is a \emph{pointed Kripke structure}.  

The satisfaction relation $\models$ between a state $(M,s)$ and belief formulas is defined as follows: (\emph{i}) $(M,s) \models p$ if \mbox{$p \in \calp$} and $M[\pi](s) \models p$;
(\emph{ii}) $(M,s) \models \B_{i}\varphi$ if  $\forall t.[(s,t) \in R_{i} \Rightarrow (M,t) \models \varphi]$;
(\emph{iii}) $(M,s) \models \neg\varphi$ if $(M,s) \not\models \varphi$;
(\emph{iv})~$(M,s) \models \varphi_1 \vee \varphi_2$ if $(M,s) \models \varphi_1$ or $(M,s) \models \varphi_2$;
(\emph{v})~$(M,s) \models \varphi_1 \wedge \varphi_2$ if $(M,s)\models \varphi_1$ and $(M,s) \models \varphi_2$. 

Two pointed Kripke structures $\langle W, \{R_{a}\}_{a \in \calag},\pi\rangle$ and $\langle W', \{R'_{a}\}_{a \in \calag},\pi'\rangle$ are \emph{bisimilar\/} if there is a relation~$\calz\subseteq W\times W'$ such that for all $(w,w')\in\calz$\/:
(\emph{i})  $M[\pi](w)=M'[\pi'](w')$; 
(\emph{ii}) for each $w_1\in W$ such that $R_a(w,w_1)$, $R'_a(w',w_1')$ for some $(w_1,w_1')\in\calz$;
(\emph{iii}) for each $w'_1\in W'$ such that $R'_a(w',w'_1)$, $R_a(w,w_1)$ for some $(w_1,w_1')\in\calz$. 

For a set of formulas $A$, $Cn(A)$ denotes the set of logical consequences of $A$.
$Cn$ is assumed to be supraclassical, i.e., if $p$ can be derived from $A$ 
 by classical truth-functional logic, then $p \in Cn(A)$. 
$Cn$ satisfies the following properties: (\emph{i})~$A \subseteq Cn(A)$; (\emph{ii})~if $A \subseteq B$ then $Cn(A) \subseteq Cn(B)$; (\emph{ii}) $Cn(A) = Cn(Cn(A))$.
We say that $A$ is a belief set if and only if $A = Cn(A)$. 

For a set~$K$ of formulas, $K \vdash p$ (resp. $K  \not\vdash p$)
 is an alternative notation for $p {\in} Cn(K)$ (resp. $p \not\in Cn(K)$).
 $Cn(\emptyset)$ is the set of tautologies.
The \emph{expansion\/} of $K$ by a formula~$p$, i.e., the operation that just adds $p$ 
 and removes nothing, is denoted $K+p$ and defined by: $K + p = Cn(K \cup \{p\}$).

\subsubsection{Logic of Belief Revision and AGM Postulates} 
We follow \citeauthor{DarwicheP97}~(\citeyear{DarwicheP97}) and state the standard postulates on the basis of \emph{epistemic states\/}~$M$, which implicitly determine a set of beliefs $K_M$. Two epistemic states are \emph{equivalent\/}, written $M_1\equiv M_2$, iff $K_{M_1}=K_{M_2}$, that is, they entail the same beliefs.
The intuitive meaning of $M * p$ is to revise the beliefs so as to ensure that $K_{M *p}$ contains~$p$ and is consistent (unless $p$ is inconsistent).
\citeauthor{AlchourronGM85}~(\citeyear{AlchourronGM85}), a.k.a.\ AGM, proposed the following eight basic postulates for one-shot belief revision\/:
\begin{itemize} 
    \item \emph{Closure}: $K_{M * p} = \Cn(K_{M*p})$
    \item \emph{Success}: $p \in K_{M*p}$
    \item \emph{Inclusion}: $K_{M * p} \subseteq K_M + p$
    \item \emph{Vacuity}: if $\neg p \not\in K$ then $K_{M * p} = K_M + p$
    \item \emph{Consistency}: $K_{M * p}$ is consistent if $p$ is consistent    
    \item \emph{Extensionality}: if $\entails p \leftrightarrow q$ then $M * p \equiv M * q$
    \item \emph{Superexpansion}: $K_{M * (p \wedge q)} \subseteq K_{M * p} + q$
    \item \emph{Subexpansion}: $K_{M * p} + q \subseteq K_{M * (p \wedge q)}$ if $\neg q \not\in K_{M* p}$
\end{itemize}
\citeauthor{DarwicheP97}~(\citeyear{DarwicheP97}) augmented the AGM framework by four more postulates for \emph{iterated\/} revision\/:
\begin{description}
\item[(DP1)] if $q \entails p$ then $K_{(M*p) * q} = K_{M*q}$
\item[(DP2)] if $q \entails \neg p$ then $K_{(M*p) * q} = K_{M*q}$
\item[(DP3)] if $p\in K_{M*q}$ then $p\in K_{(M*p) * q}$
\item[(DP4)] if $\neg p\not\in K_{M*q}$ then $\neg p\not\in K_{(M*p) * q}$
\end{description}
Another standard postulate, which strengthens (\textbf{DP3}) and (\textbf{DP4}), is the following~\cite{jin2007iterated}\/: 

\begin{description}
\item[(IN)]
\emph{Independence}: if $ \neg p\not\in K_{M*q}$ then $p\in K_{(M*p) * q}$
\end{description}

\section{Multi-agent Belief Revision (MBR) --- Basic Concepts} \label{sec:MBR}

We begin by discussing in detail the foundations for multi-agent belief revision in the context of dynamic epistemic reasoning, where single Kripke models represent the beliefs of all agents. In the section that follows, we then develop generalizations of the AGM postulates to this setting. 

\subsubsection{Belief sets} As customary in epistemic reasoning about actions and planning, a \emph{belief set\/} in MBR shall be represented by a single, pointed Kripke structure $(M,s)$ and is the set of all formulas entailed by $(M,s)$\/:
\[
K_{(M,s)}\,=\,\{\varphi\in \call_\calag\mid (M,s)\,\models\,\varphi\}
\]
From now on, we will assume that for any pointed Kripke structure $(M,s)$ in discussion,
$s$ is the true state of the world.  

\begin{figure}[t]
    \centering
\raisebox{-0.5\height}{%
    \includegraphics[width=0.46\linewidth]{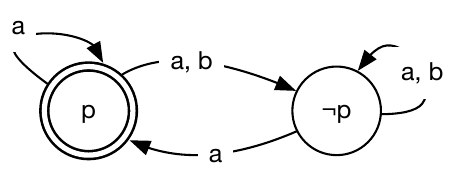}%
  }
  \ \ \ \ \
  \raisebox{-0.5\height}{%
    \includegraphics[width=0.46\linewidth]{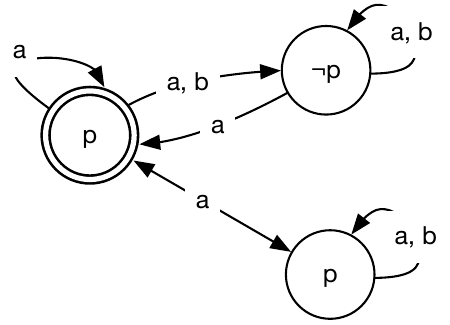}%
  }
        \caption{Two pointed Kripke structures $(M_1,s)$ and $(M_2,s)$ with the same ``true'' world~$s$ (marked with a double circle  --- so both structures agree that $p$ is actually true) but which determine different multi-agent belief sets $K_1=K_{(M_1,s)}$ and $K_2=K_{(M_2,s)}$.}
    \label{fig:belsets}
\end{figure}

As an example, Fig.~\ref{fig:belsets} shows two pointed Kripke structures, with agents~$a$ and~$b$, side by side. The egdes labelled $a,b$ from the actual world into a world in which $\neg p$ holds indicate that both agents consider it possible that $p$ is false. In fact, agent $b$ (falsely) believes in $\neg p$ because this is the only $b$-accessible world from the true world, $s$. Agent $a$, on the other hand, considers both~$p$ and~$\neg p$ possible because there is also a self-loop on~$s$ labelled $a$. Also, in both belief sets $K_1$ and $K_2$, $b$ believes that $a$ does not believe in either, that is, $\B_b[(\neg\B_a p)\wedge(\neg\B_a\neg p)]$, because from the only $b$-accessible world there is one $a$-accessible world with $p$ and one with $\neg p$. The two belief sets differ in another higher-degree belief, however\/: Only in~$K_1$, $a$~believes that $b$ believes $\neg p$. Formally, $\B_a \B_b\, \neg p\in K_1$ whereas $\neg\B_a \B_b\, \neg p\in K_2$, because in $(M_2,s)$ there is an additional $a$-accessible world in which $b$ does believe $p$.

Note that \emph{bisimilar\/} multi-agent Kripke models define the same belief set provided the two designated worlds coincide in their interpretation. When two structures are \emph{dissimilar\/}, they still induce the same belief set if they entail the same belief formulas from the respective true state of the world.

\subsubsection{Deductive closure of a belief set} For a belief set~$K$, the deductive closure $\Cn(K)$ is given by the entailment relation~``$\entails$''. Any belief set therefore satisfies $\Cn(K)=K$.

\subsubsection{Consistent beliefs/belief sets} Agents can have inconsistent beliefs\/: By definition, $\B_a\bot\in K_{(M,s)}$ if, and only if, $R_a(s)=\emptyset$ in~$M$. Agents can also believe that other agents have inconsistent beliefs etc., but a belief set itself is always consistent because $M[\pi](s)\not\models\bot$ for any pointed Kripke structure. Hence, the concept of consistency in the classical AGM postulates needs to be adapted to consistency of the beliefs \emph{of individual agents\/} in the multi-agent setting when representing a belief set by a single Kripke structure.

\subsubsection{Revision} With the aim to model actions that change the beliefs of agents 
(e.g., sensing actions, announcement actions)  
in a multi-agent environment, we consider revising belief sets to reflect the result of one agent making an observation or receiving some information 
about the environment. Formally, a belief set will be revised by \emph{first-degree belief formulas\/}. These are defined as 
\[
   \calb_{\calag,\calp} = \{\B_a\varphi \mid a\in\calag,\ \varphi\ \mbox{proposition formula over}\ \calp\}
\]
The belief set after revision by a first-degree belief $\B_a\varphi\in\calb_{\calag,\calp}$ should again be represented by a single, pointed Kripke structure, denoted by $(M,s)*\B_a\varphi$. We will simply write $K*\B_a\varphi$ to refer to the belief set $K_{(M,s)*\B_a\varphi}$ when it is clear from the context that $(M,s)$ is the underlying Kripke model of the belief set $K$.

\subsubsection{Subset relation over belief sets} The classical postulates also require us to define the concept of a subset relation among belief sets. Because every belief set is represented by a single Kripke structure, we cannot define this relation based on the set of all formulas entailed, since otherwise the subset relation would be satisfied only if the two Kripke structures entail identical sets of formulas. Therefore, and in line with the definiton of revision formulas, we consider all formulas of the form $\B_a\varphi$, where $a\in\calag$ and $\varphi$ is a proposition formula over~$\calp$, when comparing two belief sets\/:
\[ \begin{array}{c}
   K_{(M_1,s_1)}\subseteq K_{(M_2,s_2)} \\ \mbox{iff} \\
   \forall\, \B_a\varphi\in \calb_{\calag,\calp}.
   [(M_1,s_1)\models \B_a\varphi\Rightarrow (M_2,s_2)\models \B_a\varphi]
\end{array} \]
Recall, for example, the belief sets represented in Fig.~\ref{fig:belsets}. Although $K_1\not=K_2$, both of them entail the same first-degree belief formulas. This follows from the fact that for both Kripke structures $(M,s)$, we have
\[ \begin{array}{l}
   \exists (s,w),(s,w')\in R_a.\ M[\pi](w)\models p\,\wedge\, M[\pi](w')\models \neg p \\
   \wedge\ \forall (s,w)\in R_b.\ M[\pi](w)\models p
\end{array} \]
Hence, in both models it holds that $(M,s)\models \B_a\varphi$ iff $\varphi$ is a tautological propositional formula while
$(M_i,s)\models \B_b\varphi$ iff $p\entails\varphi$. Consequently,
$K_1\subseteq K_2$ and $K_2\subseteq K_1$.

\subsubsection{Minimal belief sets} A consequence of the above definition is that the ``smallest'' (w.r.t.\ the subset relation) representable belief sets are exactly those that include only first-degree belief formulas of the form $\B_a\varphi$ with $\varphi$ a propositional tautology over $\calp$. There are different Kripke structures that can be used to represent this set; a \emph{generic\/} minimal Kripke structure can be constructed as follows\/:
\mbox{$M_\emptyset=\langle W,\{R_a\}_{a\in\calag},\pi\rangle$} with $W=2^{\calp}$, $R_i=W\times W$ for all $i\in\calag$, and $\pi(w)=w$.
\begin{lemma}
For any $s\in 2^{\calp}$ we have that $K_{(M_\emptyset,s)}\subseteq K$ for all belief sets~$K$.
\end{lemma}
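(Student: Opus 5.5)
The plan is to unfold the subset relation on belief sets, which is defined by first-degree belief formulas only. Showing $K_{(M_\emptyset,s)}\subseteq K$ means: for every $\B_a\varphi\in\calb_{\calag,\calp}$ with $(M_\emptyset,s)\models\B_a\varphi$, also $(M,t)\models\B_a\varphi$, where $(M,t)$ is the pointed Kripke structure representing $K$. I will first pin down exactly which first-degree formulas hold in $(M_\emptyset,s)$, and then observe that these formulas hold in every pointed structure.

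\textbf{Step 1: characterize the first-degree beliefs of $(M_\emptyset,s)$.} In $M_\emptyset$ we have $R_a=W\times W$, so $R_a(s)=W=2^{\calp}$. By clause (\emph{ii}) of the satisfaction relation, $(M_\emptyset,s)\models\B_a\varphi$ holds iff $(M_\emptyset,w)\models\varphi$ for every $w\in 2^{\calp}$. Since $\varphi$ is a proposition formula, $(M_\emptyset,w)\models\varphi$ holds iff the interpretation $\pi(w)=w$ satisfies $\varphi$; this follows by a routine induction on propositional formulas using clauses (\emph{i}), (\emph{iii})--(\emph{v}). As $w$ ranges over all interpretations of $\calp$, $(M_\emptyset,s)\models\B_a\varphi$ iff $\varphi$ is a propositional tautology. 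This holds independently of $s$, which explains why the lemma is stated for any $s\in 2^{\calp}$.

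\textbf{Step 2: tautologies are believed everywhere.} Let $K=K_{(M,t)}$ be arbitrary and let $\varphi$ be a tautology. For every $u$ with $(t,u)\in R_a$, the world $u$ carries some interpretation $M[\pi](u)$, which satisfies $\varphi$. Hence $(M,u)\models\varphi$, by the same induction as in Step 1. If $R_a(t)=\emptyset$, the universal condition holds vacuously. Either way $(M,t)\models\B_a\varphi$, which gives the inclusion.

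The only point requiring care is the case where $\calp$ is infinite. Then $W=2^{\calp}$ is uncountable, but that is harmless. Formulas are finite, so a formula true under every interpretation of $\calp$ is exactly a tautology, and the argument goes through unchanged. I expect no real obstacle; the main subtlety is simply remembering that the subset relation here is restricted to $\calb_{\calag,\calp}$ and is not literal set inclusion of all entailed formulas.
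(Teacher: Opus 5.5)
Your proof is correct and is essentially the paper's argument. The paper states this lemma without a separate proof, relying on the preceding remark that minimal belief sets contain only first-degree beliefs in tautologies. You make that explicit: universal accessibility in $M_\emptyset$ makes $\B_a\varphi$ hold at $s$ exactly when $\varphi$ is a tautology, and such beliefs hold in every pointed Kripke structure, vacuously when $R_a(t)=\emptyset$.
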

It is worth stressing that not all smallest belief sets are equal as they can be based on structurally different Kripke models and hence contain different nested beliefs. For example, $(M_\emptyset,s)$ always entails $\B_a\neg\B_b\varphi$ for all agents \mbox{$a,b\in\calag$} and non-tautological proposition formulas~$\varphi$, that is, every agent believes that no agent believes in anything other than tautological properties about the environment. This may not be the case in other minimal belief sets.

\subsubsection{Expansion} A key concept in the classical AGM postulates is the expansion of a belief set by a new belief. The intuition behind this concept is to add a new belief while retaining the existing beliefs, together with all the logical consequences of the old and new beliefs, but in a minimal fashion. The intuition behind the following generalization of this concept to multi-agent beliefs given by a pointed Kripke structure, is that $K+\B_a\varphi$ is obtained by constructing a new structure that is a combination of\/: (1)~$M$ (the old structure); (2)~a new true state of the world $s'$; and (3)~a ``replica'' structure obtained from~$M$ by (i)~removing all links labeled $a$ into a world in which $\varphi$ is false, and (ii)~adding links labeled $a$ going from $s'$ to worlds in which there is a link labeled $a$.

Formally, let $M = (W, \{R_a\}_{a \in \mathcal{A}}, \pi)$, then \emph{expanding\/} a pointed Kripke model $(M,s)$ by a first-degree belief formula results in the pointed Kripke model $(M,s) + \B_a \varphi = (M',s')$ with $M' = (W', \{R'_a\}_{a \in \mathcal{A}}, \pi')$ such that
    \begin{itemize}
        \item  $W' = W \cup W^r \cup \{s'\}$ with $W^r  = \{s^r \mid s \in W\}$ --- the \emph{replica\/} of $W$ --- and where 
        $s'$ is a new world symbol that does not occur in $W \cup W^r$;

        \item for $w \in W$ and $w' \in \{w, w^r\}$, $\pi'(w') = \pi(w)$; and $\pi'(s') = \pi(s)$;

        \item for $x \in \mathcal{A} \setminus \{a\}$, 
        
        \begin{itemize}
            \item if $(u, v) \in R_x$ then  $(u,v)$ and $(u^r,v^r)$ belong to $R'_x$,

            \item if $(s, v) \in R_x$ then $(s', v)$ belongs to $R'_x$;
        \end{itemize} 
        \item for $x = a$,
        \begin{itemize}
            \item if $(u, v) \in R_x$ and $\pi(v)\models \varphi$ then  $(u,v)$ and $(u^r,v^r)$ belong to $R'_x$,
            \item if $(u, v) \in R_x$ and $\pi(v)\not\models \varphi$ then  $(u,v)$ is in $R'_x$,
            \item if $(s, v) \in R_x$ and $\pi(v)\models \varphi$ then $(s',v^r)$ is in  $R'_x$.
        \end{itemize}
    \end{itemize} Let $K$ be a belief set represented by the pointed Kripke model $(M,s)$, and let $\B_a\varphi\in\calb_{\calag,\calp}$, then the \emph{expansion\/} $K+\B_a\varphi$ is defined as the belief set $K_{(M',s')}$ where $(M,s')=(M,s)+\B_a\varphi$.

\begin{figure}[t]
    \centering
    \includegraphics[width=0.73\linewidth]{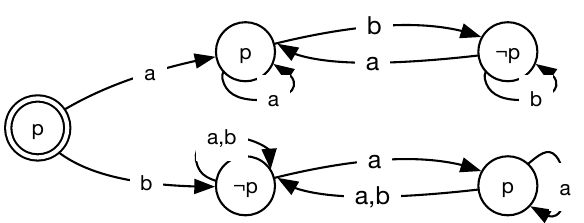}
    \caption{The Kripke model resulting from expanding $(M_1,s)$ in Fig.~\protect\ref{fig:belsets} by $\B_a p$. The new designated world~$s'$  (marked with a double circle) is linked via~$a$ to the replica of the old Kripke structure (top) and via~$b$ to the old Kripke structure itself (bottom). All $a$-links to $\neg p$-worlds have been removed in the replicated part. As a result, $a$ now believes in~$p$ (but also still in $\B_b\neg p$).}
    \label{fig:revisedbelsets}
\end{figure}

As an example, Fig.~\ref{fig:revisedbelsets} depicts the results of expanding the Kripke structure to the left in Fig.~\ref{fig:belsets} by $\B_a p$. Obviously, agent~$a$ now believes in $p$, i.e.\ $\B_a p\in K_1+\B_a p$, since the only $a$-accessible world from $s'$ satisifies~$p$. We can also see that $b$ retains exactly her old beliefs, e.g., $\B_b\neg p\in K_1+\B_A p$ and also $\B_b((\neg\B_a p)\wedge(\neg\B_a\neg p))\in K_1+\B_a p$. This is so because the $b$-accessible worlds from $s'$ are exactly those that were previously accessible from~$s$, and with the same structure. Meanwhile, $a$ also retained his belief that $b$ believes in~$\neg p$, that is, $\B_a\B_b\neg p\in K_1+\B_a p$.

The next lemma shows that expanding a belief set with $\B_a \varphi$ does not change the first-degree beliefs of other agents. 
\begin{lemma}
\label{lem:belief-other}
For any formula $\psi$ and agent $x \in \mathcal{A} \setminus \{a\}$, 
$(M,s) \models B_x \psi$ iff $(M',s') \models B_x \psi$. 
\end{lemma}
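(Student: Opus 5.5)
The approach is a bisimulation argument. The $x$-successors of $s'$ in $M'$ are exactly the $x$-successors of $s$ in $M$, for $x\neq a$. Moreover, the copy of $W$ inside $M'$ behaves like $M$ from any world reachable from such a successor. Together these give the claim for arbitrary $\psi$, not just propositional $\psi$.

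\emph{Step 1 (successors of $s'$).} We read off the definition of $M'$ that for $x\in\calag\setminus\{a\}$ the only $R'_x$-edges leaving $s'$ are the pairs $(s',v)$ with $(s,v)\in R_x$. The clauses producing $(u,v)$ and $(u^r,v^r)$ have $u\in W$ and $u^r\in W^r$, and neither of these is $s'$. Hence $R'_x(s')=R_x(s)\subseteq W$. By contrast, the $a$-edges of $s'$ point into the replica, but these are irrelevant for $\B_x$.

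\emph{Step 2 (the old part is preserved).} Let $\calz=\{(w,w)\mid w\in W\}\subseteq W\times W'$. We check that $\calz$ is a bisimulation between $M$ and $M'$.
\begin{itemize}
\item Valuations agree, since $\pi'(w)=\pi(w)$.
\item Forth: if $(u,v)\in R_y$ with $u\in W$, then $(u,v)\in R'_y$ for every $y$. For $y\neq a$ this is immediate. For $y=a$, the two $a$-clauses put $(u,v)$ into $R'_a$ whether or not $\pi(v)\models\varphi$.
\item Back: the only edges of $R'_y$ leaving a world $u\in W$ are of the form $(u,v)$ with $(u,v)\in R_y$. Edges into replica worlds leave only from replica worlds or from $s'$.
\end{itemize}
Consequently, for every $w\in W$ and every $\psi\in\call_\calag$, we have $(M,w)\models\psi$ iff $(M',w)\models\psi$. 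If one prefers to avoid citing the invariance of formulas under bisimulation, this also follows by a routine induction on $\psi$. The only nontrivial case is $\B_y\chi$, which uses $R'_y(w)=R_y(w)$ for $w\in W$.

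\emph{Step 3 (conclusion).} We have $(M',s')\models\B_x\psi$ iff $(M',v)\models\psi$ for all $v\in R'_x(s')$. By Step 1 this holds iff $(M',v)\models\psi$ for all $v\in R_x(s)$. By Step 2 this holds iff $(M,v)\models\psi$ for all $v\in R_x(s)$, i.e.\ iff $(M,s)\models\B_x\psi$.

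The main obstacle is Step 2, specifically the claim that $R'_y(u)=R_y(u)$ for $u\in W$ for \emph{every} agent $y$, including $a$. One must make sure that removing $a$-links to $\neg\varphi$-worlds happens only in the replica, and that no edge from $W$ leads into $W^r\cup\{s'\}$. Once this case analysis of the definition is done, everything else is immediate.
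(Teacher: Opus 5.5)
Your proof is correct and rests on the same key observation as the paper's one-line argument, namely that $R'_x(s')=R_x(s)$ for $x\neq a$. Your Step~2 shows that the identity on $W$ is a bisimulation between $M$ and $M'$, so old worlds satisfy the same belief formulas in $M'$ as in $M$; this makes explicit a point the paper leaves implicit, and it is genuinely needed when $\psi$ is not propositional.
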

\begin{proof} 
By construction of $(M,s) + \B_a \varphi$, 
we have that $(s',u) \in R'_x$ iff $(s,u) \in R_x$. Therefore, 
$(M,s) \models B_x \psi$ iff $(M',s') \models B_x \psi$. 
\end{proof} 
The definition of expansion applies to any multi-agent Kripke structure and new first-degree belief, including when no world in the model satisfies the new belief. The example depicted in Fig.~\ref{fig:inconsbeliefs} illustrates that in this case the resulting expanded model still does not include a world that satisfies the new belief, here\/: $\B_a \neg p$. This implies that there are no $a$-reachable worlds at all from the new designated state, which in turn means that agent~$a$ ends up with inconsistent beliefs (and hence, in particular, believes~$\neg p$). This is very much in the spirit of expansion in the classical case, when expanding by a new belief that is inconsistent with the current ones results in an inconsistent belief set~\cite{AlchourronGM85}. In the multi-agent generalization, all other agents maintain their beliefs, however (cf.~Lemma~\ref{lem:belief-other}).

\begin{figure}
    \centering
    \includegraphics[width=0.95\linewidth]{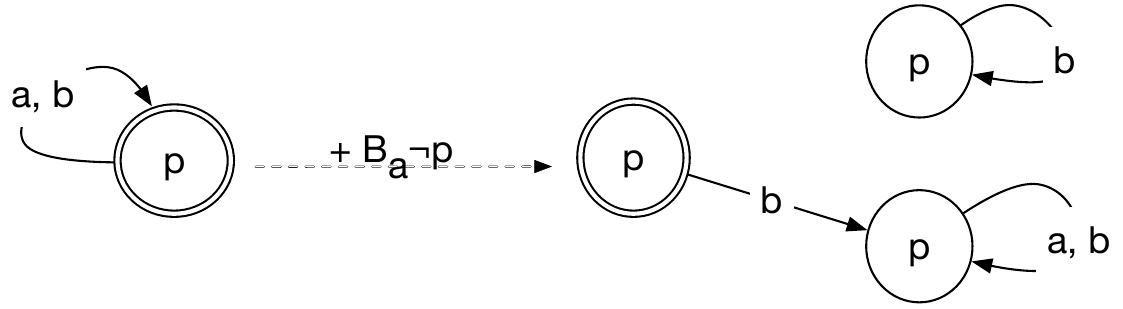}
    \caption{A Kripke structure in which $p$ is common belief among agents~$a,b$ is expanded by the new belief~$\B_a\neg p$. Agent~$b$'s first- and higher-order beliefs remain unchanged thanks to the retained, $b$-reachable copy of the old structure (bottom) whereas there is no $a$-reachable world in the replica (top), hence $\B_a\bot$ after expansion.}
    \label{fig:inconsbeliefs}
\end{figure}
It is easy to prove that expanding a belief set with $\B_a \varphi$ always results in agent~$a$ believing~$\varphi$ when $a$~ends up with consistent beliefs; and if the result is that $a$ has inconsistent beliefs, then again, vacuously, $a$ believes in $\varphi$.  
\begin{lemma}
\label{lem:belief-agent}
$(M,s) + \B_a\varphi \models \B_a \varphi$.
\end{lemma}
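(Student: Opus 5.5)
The plan is to unfold the satisfaction clause for $\B_a$ at the new designated world and check that every $a$-successor of $s'$ in $M'$ satisfies $\varphi$. Write $(M',s')=(M,s)+\B_a\varphi$. By clause (\emph{ii}) of the satisfaction relation, $(M',s')\models\B_a\varphi$ holds iff $(M',t)\models\varphi$ for every $t$ with $(s',t)\in R'_a$. Since $\varphi$ is a proposition formula, $(M',t)\models\varphi$ depends only on the valuation $\pi'(t)$. The claim therefore reduces to $\pi'(t)\models\varphi$ for all $t\in R'_a(s')$.

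Next I would identify all $a$-edges leaving $s'$. The world $s'$ is fresh: it is not in $W\cup W^r$. The construction rules for $R'_a$ with $x=a$ produce edges whose source is some $u\in W$, some replica $u^r\in W^r$, or $s'$. Only the third rule has source $s'$. So $(s',t)\in R'_a$ implies that $t=v^r$ for some $v\in W$ with $(s,v)\in R_a$ and $\pi(v)\models\varphi$. The rules for agents $x\neq a$ only add to $R'_x$ and never to $R'_a$, so no other $a$-edges out of $s'$ can arise.

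Then $\pi'(v^r)=\pi(v)$ by the valuation clause of the construction, so $\pi'(t)\models\varphi$, and the claim follows. In the degenerate case where no $a$-successor of $s$ satisfies $\varphi$, the set $R'_a(s')$ is empty, and $\B_a\varphi$ (indeed $\B_a\bot$) holds vacuously, as remarked before the lemma.

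The main obstacle is only the bookkeeping in the second step. One must confirm that $s'$ is distinct from every replica symbol, so that the rules with sources $u$ or $u^r$ cannot accidentally add $a$-edges from $s'$ to non-$\varphi$ worlds. This is guaranteed by the freshness stipulation on $s'$.
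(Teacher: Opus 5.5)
Your proof is correct and follows essentially the paper's argument: the only $a$-edges leaving the fresh world $s'$ go to replicas $v^r$ with $(s,v)\in R_a$ and $\pi(v)\models\varphi$, and $\pi'(v^r)=\pi(v)$. The paper treats the case $R'_a(s')=\emptyset$ as a separate case, whereas you absorb it as the vacuous instance of the universal quantifier; either way works.
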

\begin{proof} 
  By definion of the entailment of formulas from a pointed Kripke structure, 
    if $R'_a(s') = \emptyset$ then $(M',s')\models\bot$ so that $(M',s') \models \B_a \psi$ for any formula $\psi$, hence specifically, $(M',s') \models \B_a \varphi$. 

    Assume that $R'_a(s') \ne \emptyset$. By construction, 
    for every $u^r$ such that $(s',u^r) \in R'_a$, we have that $(s,u) \in R_a$ and 
    $\pi(u) \models \varphi$. This implies that $\pi(u^r) \models \varphi$, and thus, 
    $(M',s') \models \B_a \varphi$ because  $R'_a(s') \ne \emptyset$.
\end{proof}
The next lemma shows that the first-degree beliefs of agent~$a$ are exactly the logical consequences of its old beliefs plus the new belief.
\begin{lemma}
\label{lem:first-degree-belief-agent}

If $R'_a(s') \ne \emptyset$ then $(M',s') \models \B_a \psi$ if, and only if, $\psi \in
\Cn\left(\{\varphi\}\cup \{\phi\in\calb_{\calag,\calp}\mid \B_a\phi\in K_{(M,s)}\}\right)$. 
\end{lemma}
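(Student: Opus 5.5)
The plan is to reduce both sides of the equivalence to statements about the valuations of the $a$-successors of the designated world. Throughout I read $\psi$ and the formulas $\phi$ in the set as \emph{proposition formulas}, since that is what ``first-degree belief'' means here. The set is meant to be $\{\phi\in\call_\calp\mid \B_a\phi\in K_{(M,s)}\}$; I will call it $\Phi_a$ and write $\Gamma=\{\varphi\}\cup\Phi_a$. This restriction is needed: for nested formulas the replica differs from the original, because its $a$-links to $\neg\varphi$-worlds are removed, so the claim would fail for arbitrary $\psi$.

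\emph{Step 1 (the successors of $s'$).} By the construction of $(M,s)+\B_a\varphi$, the only $a$-links leaving $s'$ are those of the form $(s',v^r)$ with $(s,v)\in R_a$ and $\pi(v)\models\varphi$. Also $\pi'(v^r)=\pi(v)$. Since $\psi$ is propositional, $(M',v^r)\models\psi$ iff $\pi(v)\models\psi$. Hence
\[
(M',s')\models\B_a\psi \quad\mbox{iff}\quad \forall v\,[(s,v)\in R_a \wedge \pi(v)\models\varphi \Rightarrow \pi(v)\models\psi].
\]
This is essentially the computation already used in the proof of Lemma~\ref{lem:belief-agent}.

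\emph{Step 2 (``if'').} Suppose $\psi\in\Cn(\Gamma)$, and take any $v$ with $(s,v)\in R_a$ and $\pi(v)\models\varphi$. For every $\phi\in\Phi_a$ we have $(M,s)\models\B_a\phi$, so $\pi(v)\models\phi$. Thus $\pi(v)$ is a classical model of $\Gamma$. By soundness of classical propositional consequence, $\pi(v)\models\psi$. Step~1 then gives $(M',s')\models\B_a\psi$.

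\emph{Step 3 (``only if'').} This is the direction I expect to be the main obstacle. The worry is that $\Phi_a$ might be an infinite set that does not obviously pin down the set of valuations of $a$-successors. I avoid any compactness or finiteness argument with a single formula. Assume $(M',s')\models\B_a\psi$ and consider $\chi=\neg\varphi\vee\psi$. Take any $v$ with $(s,v)\in R_a$. If $\pi(v)\not\models\varphi$, then $\pi(v)\models\chi$ trivially. Otherwise Step~1 gives $\pi(v)\models\psi$, so again $\pi(v)\models\chi$. Hence $(M,s)\models\B_a\chi$, and so $\chi\in\Phi_a$. Now $\Gamma$ contains both $\varphi$ and $\neg\varphi\vee\psi$, so $\psi\in\Cn(\Gamma)$ by supraclassicality of $\Cn$.

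The hypothesis $R'_a(s')\neq\emptyset$ plays no real role in either direction. It only excludes the degenerate case in which $a$ becomes inconsistent, which Lemma~\ref{lem:belief-agent} already treats separately. In that case $\B_a\psi$ holds for every $\psi$, and Step~3 shows that $\Gamma$ is then classically inconsistent, so the equivalence holds there too. I would note this in a remark rather than in the main argument.
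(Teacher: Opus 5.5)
Your proof is correct and takes the same route as the paper's one-line sketch: identify the $a$-successors of $s'$ with the replicas of the $\varphi$-satisfying $a$-successors of $s$, then read off the propositional beliefs. Your Step~1 states this identification correctly, whereas the paper's stated observation omits the condition $\pi(v)\models\varphi$. Beyond the sketch, you give an explicit ``only if'' argument via $\neg\varphi\vee\psi\in\Phi_a$, which avoids any compactness or finiteness issue. Note that Step~2 tacitly uses that $\Cn$ is sound for Kripke models (true of the modal entailment the paper intends), not merely supraclassical as the background section assumes.
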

\begin{proof} 
The proof of this lemma is similar to the proof of Lemma~\ref{lem:belief-agent} with the observation that 
for $\psi \in \Cn\left(\varphi\cup \{\phi\in\calb_{\calag,\calp}\mid \B_a\phi\in K_{(M,s)}\}\right)$, 
$(s',u^r) \in R'_a$ and $\pi'(u^r) \models \psi$ iff $(s,u) \in R_a$ and 
$\pi(u) \models \psi$.
\end{proof}

\section{Generalizing AGM Postulates}

Using the basic formal concepts for multi-agent belief sets represented by a single, pointed Kripke strucure as developed in the previous section, we can now define generalizations of the standard postulates, beginning with the AGM potulates for one-shot revision.

\subsubsection{Closure} A revised belief set is deductively closed, denoted by $\Cn$, under the modal logic being interpreted\/:
\[ K * \B_a\varphi = \Cn(K * \B_a\varphi) \]
Since the set of belief formulas entailed by a single Kripke structure is deductively closed, this postulate holds when a revised belief set is represented by a pointed Kripke model.

\subsubsection{Success}

The result of revising a Kripke model by a first-degree belief formula should include the new belief\/:
\[ \B_a\varphi \in K*\B_a\varphi \]

\subsubsection{Inclusion} A revised Kripke model should only contain first-degree belief formulas that would be included in the expanded Kripke model\/:
\[ K * \B_a\varphi \subseteq K + \B_a\varphi \]
From Lemma~\ref{lem:belief-other} and~\ref{lem:first-degree-belief-agent}
it follows that under the Inclusion principle, a revised Kripke model contains only first-degree belief formulas that follow logically from the old and new first-degree beliefs.

Worthy of note, this generalized Inclusion postulate does not stipulate any requirements about second- or higher-degree belief formulas. In particular it allows for another agent,~$b$, to change her beliefs about agent~$a$ believing in~$\varphi$. Hence, $K*\B_a\varphi$ may contain beliefs that are not included in $K+\B_a\varphi$ if these are not first-order beliefs.

\subsubsection{Vacuity} If a Kripke model is revised by a first-degree belief that is consistent with the current beliefs, then the result should contain all first-degree belief formulas that are included in the expanded Kripke model\/:
\[ \B_a\neg \varphi \not\in K\ \Rightarrow\ K + \B_a\varphi \subseteq K * \B_a\varphi \]
By Lemma~\ref{lem:belief-other} and~\ref{lem:first-degree-belief-agent}
it follows that under the Vacuity principle, a revised Kripke model contains all first-degree belief formulas that follow logically from the old and new first-degree beliefs, provided the old beliefs did not include the opposite of the new belief.

Similar to the generalized Inclusion principle, the Vacuity postulate does allow for belief revision operators in which other agents change their belief about the belief of agent~$a$ in~$\varphi$ as a result of this revision.

It should also be noted that Vacuity and Inclusion together are a weaker requirement than stipulating that revision be identical to expansion in case a new belief is consistent with the old ones. They merely postulate that the first-degree beliefs are the same, while they do not demand anything about other belief formulas. In particular, they do not prescribe the specific structure from our definition of expansion of a pointed Kripke model with a new belief (cf.\ Section~\ref{sec:MBR}).

\subsubsection{Consistency 1} Any revision by a logically consistent belief should result in a consistent belief for the agent\/:
\[
   \not\entails \varphi\rightarrow\bot\ \Rightarrow\
   \B_a\bot \not\in K * \B_a\varphi
\]
It is worth noting that this does not postulate overall consistency of beliefs as it cannot be generally assumed that one agent changing their beliefs would always mean that any other agent that may have had inconsistent beliefs would automatically end up with consistent beliefs too. However, while it is possible that other agents change their beliefs about agent~$a$'s beliefs, it is reasonable to postulate that they do not end up with inconsistent beliefs as a result if they had consistent beliefs beforehand. For this reason, we suggest the following additional postulate on consistency.

\subsubsection{Consistency 2} If a new belief $\B_a\varphi$ is consistent, then any agent with consistent beliefs will have consistent beliefs after the revision\/: 
\[
   \B_b\bot\not\in K\ \wedge\ \not\entails \varphi\rightarrow\bot\ \Rightarrow\
   \B_b\bot \not\in K * \B_a\varphi
\]

\subsubsection{Extensionality} If two new belief formulas are logically equivalent w.r.t.\ the underlying modal logic, then a belief set revised by either of the two should give the same result\/:
\[ \entails \B_a\varphi \leftrightarrow \B_b\psi\ \Rightarrow\ K * \B_a\varphi = K * \B_b\psi \]
It is easy to see that $\B_a\varphi$ and $\B_b\psi$ are logically equivalent under \emph{any\/} pointed Kripke structure if, and only if, $a=b$ and $\varphi$ and $\psi$ are logically equivalent proposition formulas.

\subsubsection{Superexpansion} Revising a belief set by a conjunction~$\varphi\wedge\psi$ of two new beliefs of an agent~$a$ should not result in more first-degree beliefs than the expansion by $\B_a\psi$ of the result of revising by $\B_a\varphi$\/:
\[ K * \B_a(\varphi \wedge \psi) \subseteq (K * \B_a\varphi) + \B_a\psi \]
Similarly to Inclusion and Vacuity, this generalized postulate does not make any assumptions about second- or higher-degree belief formulas of any agent. This is also true for the following counterpart, subexpansion.

\subsubsection{Subexpansion}

If the second belief~$\B_a\psi$ is consistent with the result of revising a belief set by the first belief $\B_a\varphi$, then
revision by the conjunction of the two should not result in fewer first-degree beliefs than the expansion by $\B_a\psi$ of the result of revising by $\B_a\varphi$\/:
\[ \B_a\neg \psi \not\in K* \B_a\varphi \Rightarrow
(K * \B_a\varphi) + \B_a\psi \subseteq K * \B_a(\varphi \wedge \psi)
\]

\section{Multi-agent Belief Revision Operators}

Having defined generalized AGM postulates for MBR, in this section we present a generalization of the well-known ``full-meet'' revision operator for classical Belief Revision~\cite{alchourron1982logic} and show that it satisfies all generalized AGM postulates. 

We define this \emph{multi-agent full meet revision}, denoted by the opeartor name~$\fm$, as follows\/:
\[
   K_{(M,s)}\fm\B_a\varphi\,=\! \left\{\!\!\!
   \renewcommand{\arraystretch}{0.7}
   \begin{array}{ll}
     K_{(M,s)}+\B_a\varphi & \!\!\!\mbox{if } \B_a\neg\varphi\not\in K_{(M,s)}\!\!\!\! \\ \\
     K_{(M_\emptyset,s)}+\B_a\varphi & \!\!\!\mbox{otherwise}
   \end{array}
   \right.
\]
The principle behind this definition is the same as for classical full-meet revision~\cite{AlchourronGM85}. If a new belief is consistent with the current beliefs, the underlying Kripke structure is simply expanded by that belief. Otherwise, the new belief is incorporated in the most conservative manner by starting with the minimal belief set~$K_{(M_\emptyset,s)}$ (cf.\ Section~\ref{sec:MBR}), in which it is common knowledge that no agent believes in anything but tautological proposition formulas, and then expanding the undelying Kripke model~$(M_\emptyset,s)$ by the new belief.

It should be noted, however, that unlike with the classical full-meet belief revision operator, the belief set resulting from revision by a belief that is inconsistent with the old beliefs does entail more beliefs than follow logically from the new one. This is so because the generic Kripke structure~$(M_\emptyset,s)$ makes strong assumptions about higher-degree beliefs. 
For example, full-meet revision of the Kripke structure depicted on the left-hand side in Figure~\ref{fig:inconsbeliefs} by $\B_a\neg p$ results in agent~$b$ not only \emph{losing\/} both her first-degree belief in~$p$ as well as her higher-degree belief that $p$ is common knowledge, but also \emph{gaining\/} second-order beliefs of ``ignorance'', such as, say, $\B_b\left(\neg\B_a p\wedge\neg\B_a\neg p\right)$.

This notwithstanding, the generalized full-meet operator provably satisfies all of the generalized AGM postulates.

\begin{theorem}
$\fm$ satisfies the generalized AGM postulates.
\end{theorem}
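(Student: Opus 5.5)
The plan is to check each generalized postulate in turn. Throughout, I would use a single characterization of the first-degree beliefs of an expansion, read off from Lemmas~\ref{lem:belief-other}, \ref{lem:belief-agent} and~\ref{lem:first-degree-belief-agent}. Write $\Phi_x(M,s)=\{\phi\mid (M,s)\models\B_x\phi\}$ for the propositional first-degree beliefs of agent~$x$. Then for $(M',s')=(M,s)+\B_a\varphi$:
\begin{itemize}
\item $\Phi_x(M',s')=\Phi_x(M,s)$ for every $x\neq a$;
\item $\Phi_a(M',s')=\Cn(\Phi_a(M,s)\cup\{\varphi\})$ whenever $R'_a(s')\neq\emptyset$, and it is the set of all formulas otherwise.
\end{itemize}
In semantic terms, $R'_a(s')$ consists of the replicas of those $a$-successors of~$s$ that satisfy $\varphi$. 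Hence $R'_a(s')\neq\emptyset$ iff $\B_a\neg\varphi\notin K_{(M,s)}$. Since $\subseteq$ between belief sets compares only first-degree formulas, every postulate except Closure reduces to comparing these sets. The other ingredient is that in $(M_\emptyset,s)$ each agent's successors are all of $2^\calp$, so $\Phi_x(M_\emptyset,s)=\Cn(\emptyset)$.

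\textbf{Closure, Success and Extensionality.} Closure holds because the result is always given by a pointed Kripke model. Success follows directly from Lemma~\ref{lem:belief-agent} in both branches of the definition. For Extensionality, $\entails\B_a\varphi\leftrightarrow\B_b\psi$ forces $a=b$ and $\entails\varphi\leftrightarrow\psi$. Then the case condition $\B_a\neg\varphi\in K$ agrees with $\B_a\neg\psi\in K$, and the two expansions have identical accessibility relations, so the resulting models coincide.

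\textbf{Inclusion, Vacuity and Consistency.} In the consistent case the revision equals $K+\B_a\varphi$, so Inclusion and Vacuity are trivial. In the inconsistent case, expanding $(M_\emptyset,s)$ gives first-degree beliefs $\Cn(\emptyset)$ for $x\neq a$ and $\Cn(\{\varphi\})$ for~$a$. These are contained in those of $K+\B_a\varphi$, because there $a$ ends up with inconsistent beliefs and believes everything, and the beliefs of other agents are unchanged supersets of $\Cn(\emptyset)$. Vacuity is vacuous in this case. For Consistency~1, in the first branch $R'_a(s')\neq\emptyset$ by the observation above. In the second branch, $R'_a(s')$ contains the replicas of all $\varphi$-worlds of $2^\calp$, which is nonempty when $\varphi$ is consistent. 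For Consistency~2 with $b\neq a$, $R'_b(s')=R_b(s)$ in the first branch, and it is all of $2^\calp$ in the second.

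\textbf{Superexpansion and Subexpansion.} These are the main obstacle, since they require case splits on both $K\fm\B_a\varphi$ and $K\fm\B_a(\varphi\wedge\psi)$. First, agents $x\neq a$: in every combination $\Phi_x$ of $K\fm\B_a(\varphi\wedge\psi)$ equals $\Phi_x$ of $K\fm\B_a\varphi$, provided both revisions take the same branch. If they take different branches, it must be that $\B_a\neg\varphi\notin K$ but $\B_a\neg(\varphi\wedge\psi)\in K$. Then $\B_a\neg\psi\in K\fm\B_a\varphi$, so Subexpansion is vacuous. For Superexpansion, $x$'s beliefs in $K\fm\B_a(\varphi\wedge\psi)$ are only $\Cn(\emptyset)$, which is contained in anything. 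Second, agent~$a$: if $\B_a\neg\psi\notin K\fm\B_a\varphi$, then both sides equal $\Cn(\Phi\cup\{\varphi,\psi\})$ with the appropriate base $\Phi$. In this situation the branches agree, because $\B_a\neg(\varphi\wedge\psi)\in K$ iff $\B_a\neg\varphi\in K$ or the $\varphi$-successors all falsify~$\psi$. If $\B_a\neg\psi\in K\fm\B_a\varphi$, the right-hand side of Superexpansion gives $a$ inconsistent beliefs, so it contains everything.
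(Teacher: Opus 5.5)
Your proposal is correct and follows essentially the same route as the paper: a postulate-by-postulate check driven by Lemmas~\ref{lem:belief-other}--\ref{lem:first-degree-belief-agent} together with the fact that $(M_\emptyset,s)$ yields only tautological first-degree beliefs. The one difference is organizational. For Superexpansion and Subexpansion you split the cases by agent and by the Subexpansion premise $\B_a\neg\psi\in K\fm\B_a\varphi$, whereas the paper splits on $\B_a\neg(\varphi\wedge\psi)\in K$ and then on $\B_a\neg\varphi\in K$. Your split covers the same cases and lets you handle both postulates at once.
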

\noindent 
\emph{Proof.} Due to limited space, we 
include below only the proof for 
\textbf{Superexpansion}
as it is somewhat more complicated than the others. Detailed proofs for all theorems and lemmas
\ifarxive
can be found in the appendix.
\else
 in the paper are available in the supplementary report~\cite{ThielscherSon2026:arxiv}.
\fi
\begin{enumerate}
   \item Suppose $\B_a\neg(\varphi\wedge\psi)\not\in K$. Since $K$ is deductively closed, it follows that $\B_a\neg\varphi\not\in K$.
   Hence, agent~$a$ has consistent beliefs in $K+\B_a(\varphi\wedge\psi)$, in $K+\B_a\varphi$, and in $(K+\B_a\varphi)+\B_a\psi$.
   By Lemma~\ref{lem:first-degree-belief-agent} it follows that $(M,s)+\B_a(\varphi\wedge\psi)\models\B_a\chi$ iff $\chi\in\Cn(\{\varphi\wedge\psi\}\cup\{\phi\in\calb_{\calag,\calp} \mid\ \B_a\phi\in K\})$. This is equivalent to
   $\chi\in\Cn(\{\psi\}\cup\Cn(\{\varphi\}\cup\{\phi\in\calb_{\calag,\calp} \mid\ \B_a\phi\in K\}))$, which in turn by Lemma~\ref{lem:first-degree-belief-agent} is equivalent to
   $\left(K+\B_a\varphi\right) + \B_a\psi\models\B_a\chi$.
      By Lemma~\ref{lem:belief-other} it follows that all other agents too have the same beliefs in $K+\B_a(\varphi\wedge\psi)$ and in $(K+\B_a\varphi)+\B_a\psi$.
      Hence, $K \fm \B_a(\varphi \wedge \psi) \subseteq (K \fm \B_a\varphi) + \B_a\psi$.

   \item Suppose $\B_a\neg(\varphi\wedge\psi)\in K$. By definition of $(M_\emptyset,s)+\B_a(\varphi\wedge\psi)$ and Lemma~\ref{lem:first-degree-belief-agent} it follows that, for any first-degree belief, $\B_a\phi\in K\fm \B_a(\varphi\wedge\psi)$ iff $\phi\in\Cn(\varphi\wedge\psi)$, and for $x\in\calag\setminus\{a\}$, $\B_x\phi\in K\fm \B_a(\varphi\wedge\psi)$ iff $\models\phi$.

      We distinguish two cases\/:
      If $\B_a\neg\varphi\not\in K$, then $(K\fm\B_a\varphi)+\B_a\psi=(K+\B_a\varphi)+\B_a\psi$. From $\B_a\neg(\varphi\wedge\psi)\in K$ it follows that $\B_a\neg\psi\in K+\B_a\varphi$. Hence, $a$ has inconsistent beliefs in $(K+\B_a\varphi)+\B_a\psi$ while the beliefs of all other agents $x\in\calag\setminus\{a\}$ are the same in $(K+\B_a\varphi)+\B_a\psi$ and $K+\B_a(\varphi\wedge\psi)$. 
      If, on the other hand, $\B_a\neg\varphi\in K$, then $(K\fm\B_a\varphi)+\B_a\psi=(K_{(M_\emptyset,s)}+\B_a\varphi)+\B_a\psi$.
      Hence, $a$'s first-degree beliefs in $(K\fm\B_a\varphi)+\B_a\psi$ are exactly the logical consequences of $\varphi\wedge\psi$ while the beliefs of all other agents $x\in\calag\setminus\{a\}$ are the same in $(K\fm \B_a\varphi)+\B_a\psi$ and $K+\B_a(\varphi\wedge\psi)$.
   Thus, $K \fm \B_a(\varphi \wedge \psi) \subseteq (K \fm \B_a\varphi) + \B_a\psi$. \hfill{$\Box$}
\end{enumerate}

While satisfying all our generalized postulates for one-shot revision, multi-agent full-meet revision makes for a very drastic revision in cases where simple expansion would leave an agent with inconsistent beliefs. We introduce a more refined alternative in Section~\ref{sec:assessment}; prior to this, we first turn to the question of successive multi-agent revisions. 

\section{Generalized Postulates for Iterated MBR} \label{sec:dp}

We complete our framework for belief revision for multi\-agent epistemic reasoning and planning by generalizing the standard postulates for \emph{iterated\/} revision.

\subsubsection{DP1 -- Successive revision respect}

Revision by one belief followed by revising by a stronger belief makes the first revision redundant\/: 
\[
   \entails \B_b\psi \rightarrow \B_a\varphi\ \Rightarrow\ (K*\B_a\varphi) * \B_b\psi \doteq K*\B_b\psi
\]
Here, $K_1\dot=K_2$ means $K_1\subseteq K_2$ and $K_2\subseteq K_1$.

\subsubsection{DP2 -- Irrelevance of superseded beliefs}

If for two successive revisions, the second belief contradicts the first one, then the resulting beliefs for all agents should be the same as from revising according to the second belief only.
\[
   \entails\B_b\psi \rightarrow \B_a\neg\varphi\ \Rightarrow\ (K*\B_a\varphi) * \B_b\psi \doteq K*\B_b\psi
\]
 
\subsubsection{DP3 -- Consistency preservation across revisions}

If one agent's first-degree belief would be contained in the belief set after revision by any other first-degree belief, then in case the former is used to revise the belief set first, that first belief should be preserved through the second revision\/:
\[ \B_a\varphi\in K*\B_b\psi \ \Rightarrow\ \B_a\varphi \in (K*\B_a\varphi) * \B_b\psi \]

\subsubsection{DP4 -- Minimal change when reaffirming a belief}
After two consecutive revisions, the first agent should not end up believing the opposite unless they would do so if the belief set was revised by the second belief only\/:
\[ \B_a\neg\varphi\not\in K*\B_b\psi \ \Rightarrow\ \B_a\neg\varphi \not\in (K*\B_a\varphi) * \B_b\psi \]

\subsubsection{IN -- Independence}
After two consecutive revisions, the belief of the first agent should be preserved unless they would believe the opposite if the belief set was revised by the second belief only\/:
\[ \B_a\neg\varphi\not\in K*\B_b\psi \ \Rightarrow\ \B_a\varphi \in (K*\B_a\varphi) * \B_b\psi \]

Interestingly, while in the classical, single-agent case the Independence postulate \textbf{IN} strengthens both \textbf{DP3} and \textbf{DP4}~\cite{jin2007iterated} for any operator that satisfies the AGM postulates, in the generalized case Independence only strengthens \textbf{DP4}.
\begin{lemma}
\label{lem:independence}
Consider a belief revision operator that satisfies the generalized AGM postulates, then the operator satisfies multi-agent DP4 if it satisfies multi-agent Independence.
\end{lemma}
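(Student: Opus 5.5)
Assume the operator~$*$ satisfies the generalized AGM postulates and multi-agent Independence. We show multi-agent DP4: if $\B_a\neg\varphi\not\in K*\B_b\psi$, then $\B_a\neg\varphi\not\in (K*\B_a\varphi)*\B_b\psi$. The plan is to derive a contradiction from $\B_a\varphi$ and $\B_a\neg\varphi$ holding together in the iterated revision, splitting on whether $\varphi$ is consistent.

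\emph{Case 1: $\varphi$ is consistent.} Suppose $\B_a\neg\varphi\not\in K*\B_b\psi$. Independence gives $\B_a\varphi\in (K*\B_a\varphi)*\B_b\psi$. Towards a contradiction, assume also $\B_a\neg\varphi\in (K*\B_a\varphi)*\B_b\psi$. Belief sets are represented by pointed Kripke models and are deductively closed (Closure). Hence agent~$a$ believes $\varphi\wedge\neg\varphi$, i.e.\ $\B_a\bot\in (K*\B_a\varphi)*\B_b\psi$.

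We must now rule out that $a$ has inconsistent beliefs after the second revision. This is the main obstacle, because the postulates control $a$'s consistency only under side conditions. We use the two consistency postulates as follows.
\begin{itemize}
\item Since $\varphi$ is consistent, \textbf{Consistency 1} gives $\B_a\bot\not\in K*\B_a\varphi$.
\item If $a=b$, then $\psi$ must be consistent. Otherwise \textbf{Success} and \textbf{Consistency 1}, applied to $K*\B_b\psi$, would make $\B_a\bot$ and hence $\B_a\neg\varphi$ belong to $K*\B_b\psi$, contrary to the hypothesis. \textbf{Consistency 1} then yields $\B_a\bot\not\in (K*\B_a\varphi)*\B_b\psi$.
\item If $a\neq b$ and $\psi$ is consistent, apply \textbf{Consistency 2} to the belief set $K*\B_a\varphi$. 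Agent~$a$'s beliefs there are consistent, so they stay consistent after revising by $\B_b\psi$.
\end{itemize}
In each of these sub-cases we contradict $\B_a\bot\in (K*\B_a\varphi)*\B_b\psi$.

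\emph{Sub-case $a\neq b$ with $\psi$ inconsistent.} This is where I expect real difficulty. I would first try to show that the hypothesis already fails here. \textbf{Extensionality} identifies $\B_b\psi$ with $\B_b\bot$. If one can argue that revising by an inconsistent belief gives agent~$a$ no guarantees, the hypothesis $\B_a\neg\varphi\not\in K*\B_b\psi$ carries no information. If that route fails, I would restrict the claim to consistent $\psi$, or look to \textbf{Vacuity} and \textbf{Inclusion} for control over $a$'s beliefs.

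\emph{Case 2: $\varphi$ is inconsistent.} Then $\neg\varphi$ is a tautology. By deductive closure, $\B_a\neg\varphi$ belongs to every belief set, so it belongs to $K*\B_b\psi$. The antecedent of DP4 is therefore false and DP4 holds vacuously.

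Combining the cases, Independence together with the generalized AGM postulates yields DP4. The remark that Independence does not also strengthen DP3 would then be illustrated separately by a counterexample, which is not part of this lemma.
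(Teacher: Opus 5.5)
Your case analysis is correct everywhere except the sub-case you flag yourself, $a\neq b$ with $\psi\entails\bot$. The proposal leaves that sub-case open, so as written the proof is incomplete. None of the routes you sketch closes it:
\begin{itemize}
\item The antecedent $\B_a\neg\varphi\not\in K*\B_b\psi$ can genuinely hold there, so there is no vacuity to exploit. For instance, under $\fm$ agent $a$ believes only tautologies in $K\fm\B_b\bot=K_{(M_\emptyset,s)}+\B_b\bot$.
\item Restricting to consistent $\psi$ proves a weaker statement than the lemma.
\item Neither Consistency~1 nor Consistency~2 applies, since both require the revising formula to be consistent.
\end{itemize}

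The missing step is to apply \textbf{Inclusion} to the second revision. Put $K'=K*\B_a\varphi$. Inclusion gives $K'*\B_b\psi\subseteq K'+\B_b\psi$. Since $a\neq b$, Lemma~\ref{lem:belief-other} says that $a$'s first-degree beliefs in $K'+\B_b\psi$ are exactly those in $K'$. So $\B_a\bot\in K'*\B_b\psi$ would give $\B_a\bot\in K'$, contradicting Consistency~1 for the consistent $\varphi$. This works for every $\psi$ when $a\neq b$, so it also makes your appeal to Consistency~2 unnecessary.

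For comparison, the paper's proof only argues that, by Consistency~1 and~2, $a$ can end up inconsistent only if $\varphi\entails\bot$, in which case DP4's antecedent fails. That tacitly assumes $\psi$ is consistent. Your Success-based treatment of $a=b$ with inconsistent $\psi$ is therefore a real refinement, but you still need the Inclusion step above to finish the case $a\neq b$.
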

\begin{proof} 
Independence obviously implies DP4 unless $\{\B_a\phi,\B_a\neg\phi\}\subseteq (K*\B_a\varphi) * \B_b\psi$. The latter would mean that $a$ has inconsistent beliefs at the end of the two revisions, which according to Inconsistency~1 and~2 can only happen if~$\varphi\entails\bot$. This in turn implies $\B_a\neg\varphi\not\in K*\B_b\psi$, thus DP4 holds vacuously in this case also.
\end{proof}
It is noteworthy that \textbf{IN} would not entail \textbf{DP4} without the additional \textbf{Consistency~2} postulate, which guarantees that agent~$a$ does not end up believing in both $\varphi$ and $\neg \varphi$ as a result of further revision by $\B_b\psi$ after revising by $\B_a\varphi$.

\textbf{IN} does not imply \textbf{DP3} even if an operator satisfies all multi-agent AGM postulates, for the following reason\/: If $\B_a\bot\in K*\B_b\psi$ then \textbf{IN} vacuously holds while \textbf{DP3} is violated if $\B_a\bot\not\in (K*\B_a\bot\varphi)*\B_b\psi$. This is possible if an operator allows an agent to regain consistent beliefs when revising a multi-agent belief set by another agent's belief.

The next theorem summarizes the satisfaction of the generalized DP postulates of $\fm$.  
\begin{theorem}
\label{th:fm-dp}
$\fm$ satisfies \emph{\textbf{DP1}}, \emph{\textbf{DP3}}, and \emph{\textbf{DP4}} but not 
\emph{\textbf{DP2}} nor \emph{\textbf{
IN}}. However, $\fm$ does satisfy a weak version of \emph{\textbf{DP2}} where $\doteq$ is replaced by $\subseteq$.
\end{theorem}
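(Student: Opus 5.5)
The proof rests on a characterisation of the first-degree beliefs after one $\fm$ step, taken from Lemmas~\ref{lem:belief-other}, \ref{lem:belief-agent} and~\ref{lem:first-degree-belief-agent}.

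In the \emph{expansion case} ($\B_a\neg\varphi\not\in K$):
\begin{itemize}
\item $a$'s first-degree beliefs in $K\fm\B_a\varphi$ are $\Cn(\{\varphi\}\cup\{\phi\mid\B_a\phi\in K\})$;
\item every $x\neq a$ keeps exactly its first-degree beliefs from $K$.
\end{itemize}
In the \emph{reset case} ($\B_a\neg\varphi\in K$):
\begin{itemize}
\item $a$ believes exactly $\Cn(\varphi)$;
\item every $x\neq a$ believes only tautologies.
\end{itemize}
Since each postulate compares only first-degree beliefs (via $\subseteq$ or $\doteq$), this summary is all I need. Each postulate then becomes a case analysis on whether each of the (at most three) revisions involved is an expansion or a reset. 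I will also use the remark after Extensionality: $\entails\B_b\psi\rightarrow\B_a\varphi$ holds iff either ($a=b$ and $\psi\entails\varphi$), or $\entails\varphi$, or ($\psi\entails\bot$ and $b=a$). The analogous statement holds for $\B_a\neg\varphi$.

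\textbf{DP1.} The degenerate case $\entails\varphi$ is immediate: revising by $\B_a\top$ is always an expansion and preserves all first-degree beliefs. So let $a=b$ and $\psi\entails\varphi$.
\begin{itemize}
\item If $K\fm\B_a\varphi$ is an expansion, then $a$'s beliefs become $\Cn(\{\varphi\}\cup A)$, where $A=\{\phi\mid\B_a\phi\in K\}$. The second step expands iff $A\cup\{\varphi,\psi\}$ is consistent, i.e.\ iff $A\cup\{\psi\}$ is consistent, which is exactly when $K\fm\B_a\psi$ expands. The resulting beliefs $\Cn(A\cup\{\psi\})$ and the unchanged beliefs of other agents then coincide. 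In the inconsistent case both sides reset to $(M_\emptyset,s)+\B_a\psi$.
\item If the first step resets, then $\B_a\neg\varphi\in K$, hence $\B_a\neg\psi\in K$ and $K\fm\B_a\psi$ resets too. On the other side, $a$ believes $\Cn(\varphi)$, which is consistent with $\psi$ (or $\psi\entails\bot$, handled separately), so expansion gives $\Cn(\psi)$ for $a$ and tautologies for the others. Both sides agree.
\end{itemize}

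\textbf{DP3 and DP4.} For $a=b$ the argument runs the same way, using that $\varphi$ is in $a$'s beliefs after the first step. For $a\neq b$ the key observation concerns agent $b$ in $K\fm\B_a\varphi$: $b$'s first-degree beliefs are either those of $K$ (first step expands) or only tautologies (first step resets).
\begin{itemize}
\item Hence the second revision $\B_b\psi$ resets only if $K\fm\B_b\psi$ also resets, or if $\psi\entails\bot$.
\item If neither revision by $\B_b\psi$ resets, then $a$'s beliefs after the two steps are $a$'s beliefs after the first step, which contain $\varphi$ and are consistent whenever $\varphi$ is. This gives DP3 and DP4.
\item If both reset, $a$ ends with only tautologies in both. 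DP3 then needs $\entails\varphi$ and DP4 holds trivially.
\end{itemize}
I expect this bookkeeping for $a\neq b$, together with the degenerate inconsistent $\psi$, to be the main obstacle. I would organise it as a $2\times2$ table of (first step expands/resets) $\times$ (second step expands/resets).

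\textbf{Counterexamples.} Take $a\neq b$ and $K$ in which $b$ believes $r$ and $a$ believes only tautologies.
\begin{itemize}
\item \emph{DP2 fails.} Revise by $\B_a q$ and then $\B_a\neg q$. The first step expands; the second resets, so $b$ loses $\B_b r$. But $K\fm\B_a\neg q$ is an expansion retaining $\B_b r$. Hence $\supseteq$ fails.
\item \emph{IN fails.} Revise by $\B_a p$ and then $\B_b\neg r$. The second step resets, so $\B_a p$ is lost. Yet $K\fm\B_b\neg r$ also resets, so $\B_a\neg p\not\in K\fm\B_b\neg r$.
\end{itemize}

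\textbf{Weak DP2.} If the second step resets, the result is $(M_\emptyset,s)+\B_b\psi$. This contains only consequences of $\psi$ for $b$ and tautologies for the others, which lie in $K\fm\B_b\psi$ by Success. If the second step expands, the case split above on $a=b$ versus a degenerate implication shows that the beliefs are again included in those of $K\fm\B_b\psi$.
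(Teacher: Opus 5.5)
\textbf{Verdict.} Your reduction to first-degree belief profiles (expansion versus reset, via Lemmas~\ref{lem:belief-other} and~\ref{lem:first-degree-belief-agent}) is the same case analysis the paper uses. For DP1 it is leaner than the paper's bisimulation argument, since $\doteq$ only concerns first-degree beliefs. Your main DP1 case ($a=b$, $\psi\entails\varphi$) is correct, and so is your $a\neq b$ analysis of DP3 and DP4. Your DP2 and IN counterexamples, which are concrete versions of the paper's, are also correct. The problems are in the steps you wave through.

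\textbf{Weak DP2 (the main gap).} Your second sentence is unsupported. The fact you need is this: when $a=b$ and $\psi\entails\neg\varphi$, Success puts $\B_a\varphi$, hence $\B_a\neg\psi$, into $K\fm\B_a\varphi$. So the second step \emph{always} resets, and your first sentence alone finishes the proof (this is the paper's case~2). Your branch \emph{if the second step expands} is therefore empty for $a=b$. It is non-empty only when $\entails\neg\varphi$, $b\neq a$ and $\psi$ is satisfiable, and there the inclusion you assert is false. The first step yields $(M_\emptyset,s)+\B_a\bot$, and the second step is an expansion, so Lemma~\ref{lem:belief-other} keeps $\B_a\bot$. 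But $\B_a\bot\notin K\fm\B_b\psi$ whenever $a$ has consistent beliefs in $K$. The paper's case~1 asserts the same inclusion and is wrong for the same reason. So the weak DP2 claim needs a proviso such as $\varphi\not\entails\bot$ (or $a=b$); you should state it rather than claim the branch goes through.

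\textbf{DP1, degenerate case.} Your claim that revising by $\B_a\top$ is always an expansion fails when $\B_a\bot\in K$, because then $\B_a\neg\top\in K$ and $\fm$ resets. In that situation DP1 genuinely fails for $a\neq b$. Take $\B_b\neg\psi\notin K$. The right-hand side $K\fm\B_b\psi$ keeps $\B_a\bot$ and all of $b$'s old beliefs. The left-hand side leaves $a$ believing only tautologies and $b$ believing only $\Cn(\psi)$. The paper's proof silently treats only $a=b$, so you need an explicit assumption that $a$'s beliefs in $K$ are consistent (as the paper assumes for $\ev$).

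\textbf{DP3 with $a=b$.} Saying \emph{the argument runs the same way} is not accurate, since the second revision now changes $a$'s own beliefs. Let $A=\{\phi\mid\B_a\phi\in K\}$.
\begin{itemize}
\item Suppose $K\fm\B_a\psi$ expands with $A\cup\{\psi\}\entails\varphi$. Then neither step on the left can reset: $\neg\varphi\in A$, and likewise $\neg\psi\in\Cn(A\cup\{\varphi\})$, would each combine with $A\cup\{\psi\}\entails\varphi$ to give $\neg\psi\in A$, a contradiction.
\item Suppose instead $K\fm\B_a\psi$ resets. Then the premise gives $\psi\entails\varphi$, and both possible left-hand outcomes contain $\varphi$.
\end{itemize}
This step must be done explicitly. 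The paper's shortcut (the second revision is consistent unless $\psi\entails\bot$) fails here, e.g.\ for $a$ believing exactly $\Cn(\neg q)$, $\varphi=q\vee r$, $\psi=q$.
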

\noindent
\textit{Proof.} \textbf{(DP2)}
Generalized full-meet does not satisfy \textbf{DP2} for the following reason\/: If $\B_b\psi$ is consistent with the current belief set~$K$ then $K\fm\B_b\psi=K+\B_b\psi$, hence by Lemma~\ref{lem:belief-other}, all agents retain all their beliefs when revising $K$ by $b$'s new belief. But if $K$ is revised by $\B_a\varphi$ first and $\models\B_b\psi\rightarrow\B_a\varphi$ holds then $(K\fm\B_b\varphi)\fm \B_b\psi=K_{(M_\emptyset,s)}+\B_b\psi$, which implies that the beliefs of all other agents have been erased.

The fact that full-meet does not satisfy \textbf{DP2} mirrors a result in classical, single-agent belief revision~\cite{jin2007iterated}. To show that full-meet multi-agent belief revision satisfies one direction of \textbf{DP2} (namely, that revising by a belief that is then superseded by a second, contradictory belief never introduces more beliefs than revision with the second belief directly) we make a case distinction.
\begin{enumerate}
    \item Suppose that $\models\varphi\leftrightarrow\bot$, then $\B_a\neg\varphi\in K$, hence $K\fm\B_a\varphi=K_{(M_\emptyset,s)}+\B_a\bot$, that is, there is no $a$-accessible world in~$s$ while all other agents' first-degree beliefs are tautological proposition formulas. Consequently, $(K\fm\B_a\varphi)\fm\B_b\psi\subseteq K\fm\B_b\psi$.
    \item Otherwise, $\models\B_b\psi\rightarrow\B_a\varphi$ implies $a=b$ and $\psi\models\neg\varphi$, hence $\varphi\models\neg\psi$. It follows that $\B_b\psi\not\in K\fm \B_a\varphi$, hence $(K\fm\B_a\varphi)\fm\B_b\psi=K_{(M_\emptyset,s)}+\B_b\psi$, which implies
    $(K\fm\B_a\varphi)\fm\B_b\psi\subseteq K\fm\B_b\psi$. \hfill{$\Box$}
\end{enumerate}

\section{Event Model-Based Belief Revision}
\label{sec:assessment}

While full-meet is an instructive operator to show that all generalized AGM and most DP postulates can be simultaneously satisfied, it is obviously too strong for practical purposes in  
that any (non-tautological) belief may be abandoned in case of a true revision. In this section, we introduce a more sophisticated operator for MBR based on event models. Event models (a.k.a.\ update models) have been used to describe transformations of epistemic states in multi-agent domains according to a predetermined transformation pattern~\cite{BaltagMS98}). This has also been used in modeling belief-altering actions in high-level action languages \cite{BaralGPS22,RajaratnamT21}. Nonetheless, event models proposed for the purpose of updating  beliefs of agents after an action occurrence are not suitable for revising the beliefs of agents. This is illustrated in Fig.~\ref{fig:motivation-event-model}\/:
\begin{figure}[t]
    \centering
    \includegraphics[width=0.95\linewidth]{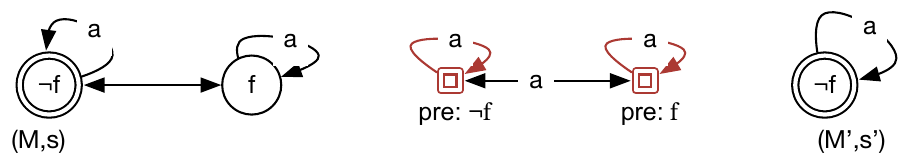}
    \caption{Revising with $\B_a f$ is not the same as sensing $f$}
    \label{fig:motivation-event-model}
\end{figure}
In the model $(M,s)$ (left), simplified to consider only a single agent, $a$ does not believe $f$ nor $\neg f$. The event model for sensing $f$ is in the middle, whose update on $(M,s)$ is $(M',s')$ (right), which indicates that $a$ believes $\neg f$ after sensing $f$. Such an update does not allow for the revision of $(M,s)$ with $\B_a f$.

For the discussion of the generalized postulates, we need some extra notation.
A \emph{literal\/} is either a proposition $p \in \calp$ or its negation $\neg p$.  
For a literal $\ell$, $\neg \ell$ denotes its negation, with $\neg \neg p = p$ for $p \in \calp$.   
An \emph{inference rule\/} (or rule) among literals is of the form $\lambda \rightarrow \delta$  where $\lambda $ and $\delta$ are sets of literals. 
Given a set of literals $w$ and a set of rules $R$, $C_R(w)$ denotes the minimal set (w.r.t.~$\subseteq$) of literals $w'$ such that $w \subseteq w'$ and, for every $\lambda  \rightarrow \delta$ in $R$, if $w \models \lambda $ then $ w' \models \delta$.  

In the presence of the set of rules $R$, the valuation function $\pi$ of any Kripke model $M$ must satisfy $R$ as well. Therefore, we require that for every world $u$ in $M$, $\pi(u)$ is consistent, i.e., $\pi(u) = C_R(\pi(u))$. 
For a set of literals $\varphi$ and an interpretation $u$, 
let $u \star \varphi$ denote an interpretation $u'$ such that $u' = C_R((u \cap u') \cup \varphi)$.  
In general, there might exist several interpretations $u'$ satisfying the aforementioned equation. Fortunately, 
it is well-known that there are conditions on $R$  such that there is a unique $u'$ that satisfies the equation  (see, e.g., \citeauthor{TuSGM11}'s (\citeyear{TuSGM11}) work). In this paper, we will assume that $R$ only consists of rules of the form $p \rightarrow q$ where $p, q$ are literals and, for each $u$ and $p$, there is a unique $u' = C_R((u \cap u') \cup \{p\})$, 
since this is sufficient for dealing with the \textbf{DP1} and \textbf{DP2} postulates. 
    
Consider $a \in \calag$ and a set of literals $\varphi$. 
We define $\mathbf{\Sigma}^a(\varphi)$, called the \emph{event model for revision by $\B_a \varphi$}, as the event model   
 $\langle \Sigma, \{E_a\}_{a \in \calag}, pre, \mathit{eff}  \rangle$   
 where 
\begin{itemize}
    \item $\Sigma = \{\sigma, \delta, \sigma_a, \delta_a, \epsilon\}$;
    \item $E_a = \{(\sigma,\sigma_a), (\sigma_a,\epsilon), (\delta,\delta_a), (\delta_a,\epsilon), (\epsilon,\epsilon)\}$;
    \item $E_x = \{(\eta,\epsilon) \mid \eta \in \Sigma\setminus\{\sigma_a,\delta_a\}\}$ for $x \in \calag \setminus \{a\}$; 
    \item $pre(\sigma) = \neg \B_a \neg \varphi$, 
    $pre(\delta) = \B_a \neg \varphi$, 
    $pre(\sigma_a) = \varphi$, 
    $pre(\delta_a) = \neg \varphi$, 
    and $pre(\epsilon) = \top$.
    \item for $u \in W$, $\mathit{eff}(u,\eta) = \pi(u)$ for $\eta \in \Sigma \setminus \{\delta_a\}$,  
    and $\mathit{eff}(u,\delta_a) = \pi(u) \star \varphi$.  
\end{itemize}
In the above definition, $\Sigma$ is the set of events representing possible views of the event ``\emph{$(M,s)$ is revised by $\B_a \varphi$}''. Intuitively, this revision could affect the worlds accessible by $a$ in the following ways\/: If $\neg \B_a \neg \varphi$ is true, i.e., 
$\B_a \varphi$ or $\neg (\B_a \varphi \wedge \B_a \neg \varphi)$ holds, then some worlds accessible by $a$ satisfy $\varphi$ and $a$ can just eliminate all worlds satisfying $\neg \varphi$ from its accessibility. 
\begin{figure}[t]
    \centering
    \includegraphics[width=0.69\linewidth]{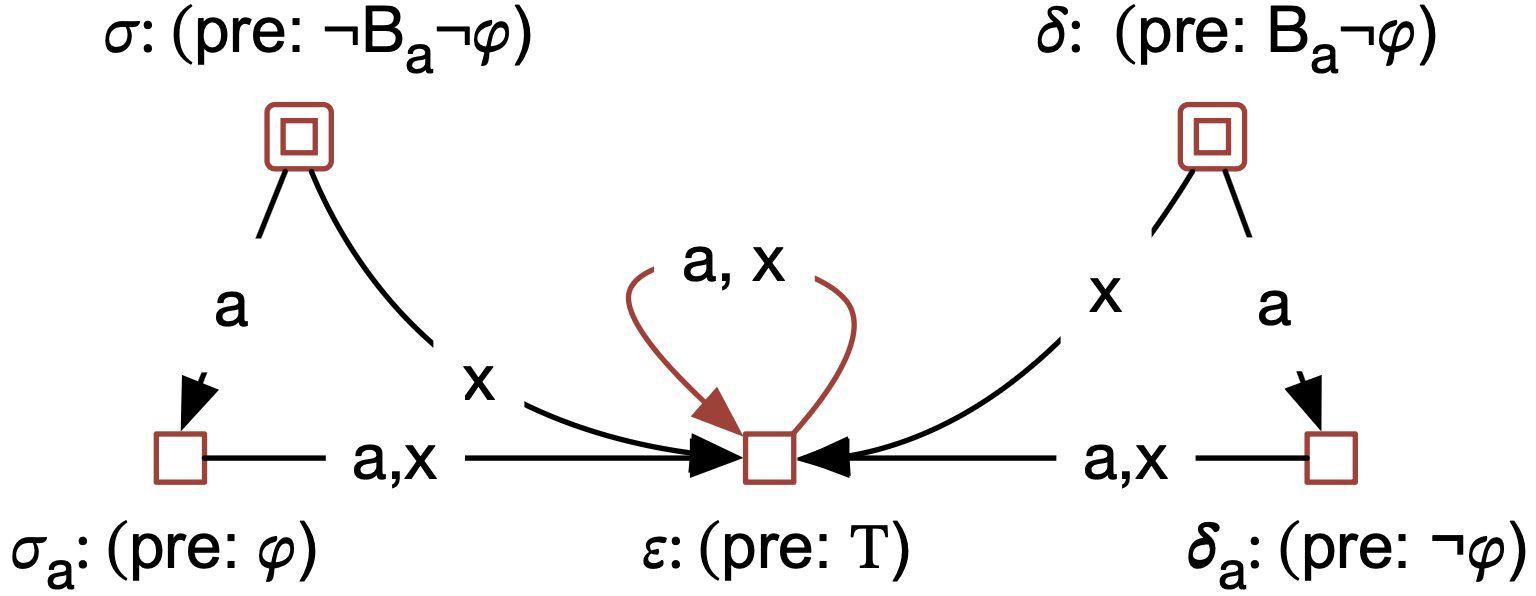}
    \caption{Revision event model for $\B_a \varphi$}
    \label{fig:revision-operator}
\end{figure}
This is represented by the events $\sigma$ and $\sigma_a$ (Fig.~\ref{fig:revision-operator}) where $\sigma$ denotes a \emph{designated event} and $\sigma_a$ has the precondition $\varphi$.  
If, on the other hand, 
$\B_a \neg \varphi$ is true, then agent~$a$ should revise his beliefs in every world accessible by~$a$, via the definition of $\mathit{eff}$.  This is represented by the events $\delta$ and $\delta_a$.
For agent $x \in \calag \setminus \{a\}$, nothing changes and thus, $x$'s view is that only the event $\epsilon$ occurs. 

The revision of $\B_a \varphi$ in $(M,s)$,
denoted by $(M,s) \ev \B_a \varphi$, 
is a pointed Kripke structure 
$(M^\varphi,s^\varphi)$ where $M^\varphi = (W^\varphi, \{R^\varphi_x\}_{x \in \calag}, \pi^\varphi)$
and 
\begin{itemize}
    
\item 
$W^\varphi = \{(u,\tau) \mid u \in W, \tau \in \Sigma, (M,u) \models pre(\tau)\}$; 

\item 
$((u,\tau), (u',\tau')) \in R^\varphi_x$ iff $(u,\tau),(u',\tau')\in W^\varphi$ along with
$(u,u') \in R_x$ and $(\tau,\tau') \in E_x$;  

\item $\pi^\varphi((u,\tau)) = \mathit{eff}(u, \tau)$; and 

\item $s^\varphi = (s, \sigma)$ if  $(M,s) \models \neg \B_a \neg \varphi$;
otherwise, $s^\varphi = (s, \delta)$.  

\end{itemize}

Before we discuss the properties of $\ev$ in detail, let us observe that
if $R_a(s) \ne \emptyset$ then $R^\varphi_a(s^\varphi) \ne \emptyset$, i.e.,  
$a$ has consistent belief in $(M^\varphi, s^\varphi)$. This is the main 
difference between $\ev$ and $\fm$. 
We prove that beliefs of other agents do not change. 
\begin{lemma}
\label{lem:revision-other}
    For $x \in \calag \setminus \{a\}$ and a proposition formula $\psi$,  
    $ (M^\varphi,s^\varphi) \models \B_x \psi$ iff $(M,s) \models \B_x \psi$.
\end{lemma}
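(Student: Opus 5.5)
The plan is to compare the $x$-successors of the designated world directly, since $\psi$ is a proposition formula and so $\B_x\psi$ only depends on the valuations of the $x$-accessible worlds. First I will check that $s^\varphi$ is indeed a world of $M^\varphi$. If $(M,s)\models\neg\B_a\neg\varphi$ then $s^\varphi=(s,\sigma)$ and $pre(\sigma)$ holds at $s$. Otherwise $(M,s)\models\B_a\neg\varphi$, so $s^\varphi=(s,\delta)$ and $pre(\delta)$ holds at $s$. In both cases write $s^\varphi=(s,\tau_0)$ with $\tau_0\in\{\sigma,\delta\}$.

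Next I will compute $R^\varphi_x(s^\varphi)$ for $x\in\calag\setminus\{a\}$. By definition of $\mathbf{\Sigma}^a(\varphi)$, the only $E_x$-successor of $\tau_0$ is $\epsilon$, since $E_x=\{(\eta,\epsilon)\mid \eta\in\Sigma\setminus\{\sigma_a,\delta_a\}\}$ and $\tau_0\notin\{\sigma_a,\delta_a\}$. Since $pre(\epsilon)=\top$, every pair $(u',\epsilon)$ with $u'\in W$ belongs to $W^\varphi$. By the definition of $R^\varphi_x$ this gives
\[
R^\varphi_x(s^\varphi)=\{(u',\epsilon)\mid (s,u')\in R_x\}.
\]
So $u'\mapsto(u',\epsilon)$ is a bijection between $R_x(s)$ and $R^\varphi_x(s^\varphi)$. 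Moreover, $\pi^\varphi((u',\epsilon))=\mathit{eff}(u',\epsilon)=\pi(u')$, because $\epsilon\neq\delta_a$.

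Finally, for a proposition formula $\psi$, $(M^\varphi,s^\varphi)\models\B_x\psi$ holds iff $\pi^\varphi((u',\epsilon))\models\psi$ for all $(s,u')\in R_x$. This is equivalent to $\pi(u')\models\psi$ for all $(s,u')\in R_x$, that is, to $(M,s)\models\B_x\psi$. This also covers the case $R_x(s)=\emptyset$, where both sides hold vacuously.

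The only delicate step is the successor computation. One has to make sure that no $x$-edge leads from $\tau_0$ to an event other than $\epsilon$, and that the valuation change via $\star$ occurs only in event $\delta_a$, which is never $x$-reachable from $\tau_0$. Both facts are read off directly from the definition of the event model, so I expect no real obstacle.
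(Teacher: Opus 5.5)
Your proposal is correct and follows the paper's own proof: it checks that $s^\varphi=(s,\eta)$ satisfies $pre(\eta)$, identifies the $x$-successors of $s^\varphi$ with the worlds $(u,\epsilon)$ for $(s,u)\in R_x$, and uses $\pi^\varphi((u,\epsilon))=\pi(u)$. You also state explicitly that $\epsilon$ is the only $E_x$-successor of $\sigma$ and $\delta$, a point the paper leaves implicit.
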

\begin{proof}
    By the construction of $(M^\varphi,s^\varphi)$, we have that 
    $s^\varphi = (s,\eta)$ such that $(M,s) \models pre(\eta)$.
    Furthermore, $(u, \epsilon) \in W^\varphi$ for every $u \in W$ and  
    $((s, \eta), (u, \epsilon)) \in R^\varphi_x$ iff $(s, u) \in R_x$. 
    Because $\pi^\varphi((u,\epsilon)) = \pi(u)$ for every $u \in W$,   
    $ (M^\varphi,s^\varphi) \models \B_x \psi$ iff $(M,s) \models \B_x \psi$.
\end{proof}
\begin{theorem}
\label{th:ev-agm}
$\ev$ satisfies the generalized AGM postulates.
\end{theorem}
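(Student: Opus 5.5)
The plan is to first describe exactly the first-degree beliefs of every agent in $(M^\varphi,s^\varphi)$, and then check each postulate against that description. Lemma~\ref{lem:revision-other} already says that agents $x\neq a$ keep exactly their first-degree beliefs. So the key step is a characterization lemma for agent~$a$, with two cases on the designated event.

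\emph{Case $(M,s)\models\neg\B_a\neg\varphi$.} Here $s^\varphi=(s,\sigma)$. The $a$-successors of $s^\varphi$ are the worlds $(u,\sigma_a)$ with $(s,u)\in R_a$ and $\pi(u)\models\varphi$. Their valuations are unchanged, and the set is nonempty by the case assumption. Hence $(M^\varphi,s^\varphi)\models\B_a\chi$ iff $\chi$ holds in every $R_a$-successor of $s$ satisfying $\varphi$. This is exactly the characterization that Lemma~\ref{lem:first-degree-belief-agent} gives for $K+\B_a\varphi$, namely $\chi\in\Cn(\{\varphi\}\cup\{\phi\mid\B_a\phi\in K\})$.

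\emph{Case $(M,s)\models\B_a\neg\varphi$.} Here $s^\varphi=(s,\delta)$. The $a$-successors are the worlds $(u,\delta_a)$ for all $(s,u)\in R_a$, with valuation $\pi(u)\star\varphi$. Since $\star$ yields $C_R(\cdot\cup\varphi)\supseteq\varphi$, each such world satisfies $\varphi$. The successor set is nonempty iff $R_a(s)\neq\emptyset$. If $R_a(s)=\emptyset$, agent $a$ is left with $\B_a\bot$.

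With this characterization in hand, most postulates are routine.

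\textbf{Closure} holds because the result is a pointed Kripke structure. \textbf{Success} follows from both cases: every $a$-successor satisfies $\varphi$, or there are none. \textbf{Extensionality} holds because logically equivalent $\varphi,\psi$ give identical preconditions, and the same effects assuming $\star$ depends only on the set of literals entailed. \textbf{Consistency 2} follows from Lemma~\ref{lem:revision-other} for $b\neq a$; for $b=a$, $\B_a\bot\notin K$ means $R_a(s)\neq\emptyset$, which gives nonempty successors in either case, using that $\varphi$ is a consistent literal set so that $\pi(u)\star\varphi$ exists.

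\textbf{Inclusion} and \textbf{Vacuity} come from the first case combined with Lemmas~\ref{lem:belief-other} and~\ref{lem:first-degree-belief-agent}. When $\B_a\neg\varphi\in K$, Inclusion holds trivially: $K+\B_a\varphi$ gives $a$ no $a$-successors, so it contains every $\B_a\chi$, and other agents agree by the two lemmas.

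\textbf{Consistency 1} is the first problematic postulate, for two reasons. It fails literally if $R_a(s)=\emptyset$ and the first case does not apply; but $R_a(s)=\emptyset$ implies $\B_a\neg\varphi$, so it lands in the second case with empty successors. I would have to check whether the intended reading assumes $a$ had consistent beliefs, or argue it with the paper's remark that consistency is preserved when $R_a(s)\neq\emptyset$. I expect to state the result under that proviso, matching the observation preceding Lemma~\ref{lem:revision-other}.

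\textbf{Superexpansion} and \textbf{Subexpansion} are the main obstacle; I would split into three cases.
\begin{enumerate}
\item If $\neg\B_a\neg(\varphi\wedge\psi)$ holds at $s$, then $\neg\B_a\neg\varphi$ also holds. Both sides reduce to $a$-successors of $s$ satisfying $\varphi\wedge\psi$, which yields equality.
\item If $\B_a\neg(\varphi\wedge\psi)$ holds but $\neg\B_a\neg\varphi$ does, then $K\ev\B_a\varphi$ believes $\neg\psi$ for $a$. The expansion by $\B_a\psi$ then makes $a$ inconsistent, so Superexpansion holds trivially and Subexpansion's antecedent fails.
\item If $\B_a\neg\varphi$ holds, I must compare $\{\pi(u)\star(\varphi\wedge\psi)\}$ with $\{\pi(u)\star\varphi\}$ filtered by $\psi$.
\end{enumerate}
In the third case, equality requires that whenever $\pi(u)\star\varphi\models\psi$, it also equals $\pi(u)\star(\varphi\wedge\psi)$. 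I would try to derive this from the uniqueness assumption on $\star$ and the restriction of $R$ to binary literal rules. If that identity fails in general, I would look for a counterexample, since this is where the theorem is most fragile. Other agents are handled throughout by Lemmas~\ref{lem:revision-other} and~\ref{lem:belief-other}.
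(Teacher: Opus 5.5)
Your route is the paper's. You use the two-case description of $a$'s successors, Lemmas~\ref{lem:revision-other} and~\ref{lem:belief-other} for the other agents, and for Super/Subexpansion exactly the paper's three cases (its cases 1, 2(b), 2(a)). Your caveat on Consistency~1 is correct. The paper builds in the same proviso $R_a(s)\neq\emptyset$ by assuming a ``consistent $K$''.

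The gap is in your third case, $(M,s)\models\B_a\neg\varphi$. The identity you want is true. If $u'=\pi(u)\star\varphi$ and $\psi\subseteq u'$, then $(\pi(u)\cap u')\cup\varphi\cup\psi\subseteq u'$. Monotonicity of $C_R$ and $C_R(u')=u'$ give $u'=C_R((\pi(u)\cap u')\cup\varphi\cup\psi)$. Uniqueness of the fixpoint (for literal sets, which the definition of $\star$ tacitly assumes) then gives $u'=\pi(u)\star(\varphi\wedge\psi)$.

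But this only shows that the $a$-successor valuations of $(K\ev\B_a\varphi)+\B_a\psi$ are among those of $K\ev\B_a(\varphi\wedge\psi)$, which is Superexpansion. It does not give equality. Revising by $\B_a(\varphi\wedge\psi)$ keeps \emph{every} $u\in R_a(s)$, updated to $\pi(u)\star(\varphi\wedge\psi)$. The expansion instead discards each $u$ with $\pi(u)\star\varphi\not\models\psi$. Subexpansion needs the reverse inclusion, and that inclusion fails.

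Here is a counterexample. Take $R=\emptyset$, atoms $p,q,r$, and $R_a(s)=\{u_1,u_2\}$ with $\pi(u_1)=\{\neg p,q,r\}$ and $\pi(u_2)=\{\neg p,\neg q,\neg r\}$. Let $\varphi=p$ and $\psi=q$.
\begin{enumerate}
\item $K\ev\B_a p$ gives $a$ the worlds $\{p,q,r\}$ and $\{p,\neg q,\neg r\}$, so $\B_a\neg q\notin K\ev\B_a p$.
\item The expansion $(K\ev\B_a p)+\B_a q$ keeps only $\{p,q,r\}$, so it contains $\B_a r$.
\item $K\ev\B_a(p\wedge q)$ gives $a$ the worlds $\{p,q,r\}$ and $\{p,q,\neg r\}$, so $\B_a r\notin K\ev\B_a(p\wedge q)$.
\end{enumerate}

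So Subexpansion cannot be proved for $\ev$ as stated, by your argument or any other. The paper's own Subexpansion proof has the same hole: it asserts an ``iff'' from exactly the one-directional fact above. This is the classical pattern that pointwise update satisfies the analogue of Superexpansion but not of Subexpansion.
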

\noindent
\emph{Proof.}
    As above, proofs are given for selected interesting postulates only. We assume a consistent $K$ that is represented by $(M,s)$ with  
 $M = (W, \{R_x\}_{x \in \calag}, \pi)$. 
We use $(M,s) \ev \B_a \varphi$ and 
$(M^\varphi,s^\varphi)$ interchangeably  
where $M^\varphi = (W^\varphi, \{R^\varphi_x\}_{x \in \calag}, \pi^\varphi)$.
Successive revisions such as $(K \ev \B_a \varphi) \ev \B_a \psi$ 
will be denoted by $(M^{\varphi; \psi}, s^{\varphi; \psi})$ etc. 
Furthermore, $R_a(s|\varphi) = \{u \mid u \in R_a(s), \pi(u) \models \varphi\}$.
Below, we use conjunctions of literals and sets of literals interchangeably, and we always assume that a conjunction of literals $\varphi$ is consistent in the classical sense, i.e., $\varphi \not\vdash \bot$. 
For a set of literals $\varphi$,
$\neg \varphi$ denotes $\{\neg \ell \mid \ell \in \varphi\}$. 

(\emph{Success}) We show $K\ev{} \B_a\varphi \models \B_a \varphi$ via two cases: 
\begin{enumerate}
    \item $(M,s) \models \neg \B_a \neg \varphi$, i.e., 
    $a$ does not believe in $\neg \varphi$ before the revision.
    This implies that $s^\varphi = (s, \sigma)$. 
    Furthermore, because $K$ is consistent, we have that $R_a(s) \ne \emptyset$
    and there exists some $u \in W$ such that $(s,u) \in R_a$ and
    $\pi(u) \models \varphi$, which implies that $((s,\sigma), (u,\sigma_a)) \in R^\varphi_a$, 
    i.e., $R^\varphi(s^\varphi) \ne \emptyset$. 
    In addition, if $((s, \sigma), (u, \rho)) \in R^\varphi_a$ then $\rho = \sigma_a$, and hence, $\pi(u) \models \varphi$, because 
    $E_a$ contains only one element related to $\sigma$, $(\sigma, \sigma_a)$, and $(s,u) \in R_a$.
    Thus, we have that $(M^\varphi, s^\varphi) \models \B_a \varphi$. 

    \item $(M,s) \models \B_a \neg \varphi$.
    This implies that $s^\varphi = (s, \delta)$. 
    Again,  because $K$ is consistent, we have that $R_a(s) \ne \emptyset$
    and for each $u \in R_a(s)$, $\pi(u) \models \neg \varphi$.  
    By the construction of $(M^\varphi, s^\varphi)$, 
    $R^\varphi_a(s^\varphi) \ne \emptyset$. 
    Consider $u' \in R^\varphi_a(s^\varphi)$. We have that 
    $u ' = (u, \delta_a)$ for some $u \in R_a(s)$, and hence, $\pi^\varphi(u') = C_R((\pi(u) \cap \pi^\varphi(u')) \cup \{\varphi\})$, which implies $\varphi \in \pi^\varphi(u')$.  
    It follows that $(M^\varphi, s^\varphi) \models \B_a \varphi$.    
    \hfill{$\Box$}
\end{enumerate}

\emph{(Superexpansion)}  Consider a propositional formula $\phi$, then Lemma~\ref{lem:revision-other} and 
Lemma~\ref{lem:belief-other} imply that 
to prove that $\ev$ satisfies this postulate, 
it suffices to show that  
$ K \ev{}  \B_a(\varphi \wedge \psi) \models \B_a \phi$ implies 
$(K \ev{}  \B_a\varphi) + \B_a \psi  \models \B_a \phi$.
Let $(M',s')$ denote $(M^{\varphi},s^{\varphi}) + \B_a \psi$. 
We consider two cases\/:
\begin{itemize}
    \item $(M,s) \models \neg \B_a \neg (\varphi \wedge \psi)$.
    So, $s^{\varphi \wedge \psi} = (s, \sigma^{\varphi \wedge \psi})$ and $(s^{\varphi \wedge \psi},u^{\varphi \wedge \psi}) \in R^{\varphi \wedge \psi}_a$ 
    where $u^{\varphi \wedge \psi} = (u,\sigma^{\varphi \wedge \psi}_a)$ iff $(s,u) \in R_a$ and $\pi(u) \models (\varphi \wedge \psi)$.
    Since $(M,s) \models \neg \B_a \neg (\varphi \wedge \psi)$, we have 
    that $(M,s) \models \neg \B_a \neg \varphi$, i.e., $u \in R_a(s|\varphi \wedge \psi)$. 
    Thus $s^{\varphi} = (s, \sigma^{\varphi})$ 
        and $(s^{\varphi},u^{\varphi}) \in R^{\varphi}_a$ where $u^{\varphi} = (u,\sigma^{\varphi}_a)$ iff $(s,u) \in R_a$ and $\pi(u) \models \varphi$, i.e., $u \in R_a(s|\varphi)$.
        Because $R_a(s|\varphi) = R_a(s|\varphi \wedge \psi) \cup R_a(s|\varphi \wedge \neg \psi)$ and 
        the construction of $(K \ev{}  \B_a\varphi)   \models \B_a \phi$ we have 
        $R'_a(s') = \{u^r \mid u \in R_a(s), \pi(u) \models \varphi \wedge \psi\}$, which proves
        the consequence of the postulate.
        
    \item $(M,s) \models \B_a \neg (\varphi \wedge \psi)$. Then, $s^{\varphi \wedge \psi} = (s, \delta^{\varphi \wedge \psi})$ and 
    for every $u^{\varphi \wedge \psi}$ such that     
    $(s^{\varphi \wedge \psi},u^{\varphi \wedge \psi}) \in R^{\varphi \wedge \psi}_a$ holds, we have 
    $u^{\varphi \wedge \psi} = (u,\delta^{\varphi \wedge \psi}_a)$ iff 
    $(s,u) \in R_a$.
    Therefore, we can conclude that $K \ev{}  \B_a(\varphi \wedge \psi) \models \B_a \phi$ iff  for every $u 
    \in R_a(s)$, $\pi(u) \star (\varphi \wedge \psi) \models \phi$.
    There are two cases: 
    \begin{enumerate}
        \item $(M,s) \models \B_a \neg \varphi$. 
        In this case, similar arguments to the above allow us to conclude that 
        $(s^{\varphi},u^{\varphi}) \in R^{\varphi}_a$ iff 
        $u^{\varphi} = (u,\delta^{\varphi}_a)$ for some $(s,u) \in R_a$ and $\pi(u^\varphi) = \pi(u)\star \varphi$.
        Because of the construction of $(K \ev{}  \B_a\varphi) + \B_a\psi$, we have two sub-cases: 
        \begin{enumerate}
        \item $K \ev{}  \B_a\varphi \models \B_a \neg \psi$. In this case, 
        $a$ has inconsistent beliefs in $(K \ev{}  \B_a\varphi) + \B_a\psi$, i.e.,
        the postulate holds trivially. 
        \item $K \ev{}  \B_a\varphi \models \neg \B_a \neg \psi$.  
        In this case, we can conclude that   $R'_a(s') \ne \emptyset$. 
        Furthermore, $(s',u') \in R'_a$ iff $u' = u^\varphi$ and 
        $\pi(u^\varphi) \models \psi$. 
        Given that $\star$ satisfies the classical AGM postulate of Superexpansion, we can conclude that   
        $K \ev{}  \B_a(\varphi \wedge \psi) \models \B_a \phi$ 
        implies 
        $K \ev{}  \B_a\varphi + \B_a \psi \models \B_a \phi$.
        \end{enumerate}
                
        \item $(M,s) \models \neg \B_a \neg \varphi$. 
        We have $s^{\varphi} = (s, \sigma^{\varphi})$ and $(s^{\varphi},u^{\varphi}) \in R^{\varphi}_a$ where $u^{\varphi} = (u,\sigma_a^{\varphi})$ 
        iff $(s,u) \in R_a$ and $\pi(u) \models \varphi$, which implies that   $\pi^{\varphi}(u^{\varphi}) \models \neg \psi$ 
        for every $(s^{\varphi},u^{\varphi}) \in R^{\varphi}_a$. This means that $a$'s belief is inconsistent in  
        $(K \ev{}  \B_a\varphi) + \B_a\psi$, and thus, the postulate also holds in this case. \hfill{$\Box$}
    \end{enumerate}
\end{itemize}

\noindent
While $\ev$ satisfies all generalized AGM postulates, it does not satisfy \textbf{DP1}-\textbf{DP2}. This is because $\ev$ attempts to remove uncertainty in the beliefs of an agent before revising them, which is  encoded in the event $\sigma$ in which the agent is uncertain about $\varphi$ and,  due to the precondition of $\sigma_a$, the operator only retains worlds satisfying $\varphi$. Nevertheless, 
these postulates, in a simpler form where $\varphi$ and $\psi$ are literals, 
are satisfied under restricted conditions. 
\begin{theorem}
\label{th:ev-dp}
 $\ev$ satisfies (\emph{i}) 
 \emph{\textbf{DP1}} 
 if $\B_a \neg p \not\in K \ev{} \B_a q$; 
 (\emph{ii})  
 \emph{\textbf{DP2}} if $\B_a q \vee \B_a p \in K$;
 (\emph{iii})
 \emph{\textbf{DP3}}; 
 (\emph{iv}) \emph{\textbf{DP4}} and \emph{\textbf{IN}} if agent $a$ has consistent beliefs in $K$.
\end{theorem}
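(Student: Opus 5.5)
Throughout, $p$ and $q$ are literals and the revisions are by $\B_a p$ and $\B_a q$. This is forced for \textbf{DP1}/\textbf{DP2}, since by the remark after Extensionality the premise $\entails \B_b q\rightarrow\B_a p$ forces $b=a$. For \textbf{DP3}/\textbf{DP4}/\textbf{IN} with $b\neq a$ the claim is immediate, and we dispose of that case first. The reason is that $\ev$ by $\B_b q$ keeps, for $a$, the $\epsilon$-copies of $a$'s old successors. By Lemma~\ref{lem:revision-other}, $a$'s first-degree beliefs in $(K\ev\B_a p)\ev\B_b q$ equal those in $K\ev\B_a p$, and these contain $\B_a p$ by Success.

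For $b=a$ the whole proof rests on one explicit description of $R^{p;q}_a(s^{p;q})$, split by which designated events are used. There are four cases: $\sigma$ or $\delta$ in the first revision, and $\sigma$ or $\delta$ in the second. The facts we use are these:
\begin{itemize}
\item Under $\sigma$, the new $a$-successors are copies of the old $a$-successors satisfying the literal, with valuations unchanged.
\item Under $\delta$, the new $a$-successors are copies of all old $a$-successors, with valuation $\pi(u)\star$literal.
\item Non-$a$ agents always see the $\epsilon$-copy of the original model, so by Lemma~\ref{lem:revision-other} their first-degree beliefs never change.
\end{itemize}
We also use two properties of $\star$ under our restriction to binary rules with unique results. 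First, $u\star\ell\models\ell$. Second, if $u\models\ell$ then $u\star\ell=u$, and $(u\star\ell)\star\ell'=u\star\ell'$ whenever $\ell'\entails\ell$, which for literals means $\ell'=\ell$.

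\textbf{DP1.} The premise gives $q\entails p$, so for literals $q=p$. The hypothesis $\B_a\neg p\notin K\ev\B_a q$ means the second revision takes the $\sigma$-branch. We compare the two sides case by case. If $K$ took $\sigma$ for $p$, then after the first revision all $a$-successors already satisfy $p$, so the second $\sigma$-step keeps them all. Hence both sides yield $R_a(s|p)$ with unchanged valuations. If $K$ took $\delta$, both sides yield $\{\pi(u)\star p\}$.

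\textbf{DP2.} Here $q\entails\neg p$, so $q=\neg p$. The hypothesis $\B_a q\vee\B_a p\in K$ rules out the mixed situation in which $a$ is uncertain, which is where $\sigma$ would discard the $q$-worlds irreversibly. In the case $\B_a p$, the first revision changes nothing for $a$, and the second takes $\delta$ in both computations. In the case $\B_a\neg p$, the first revision yields $\pi(u)\star p$ worlds. The second revision by $\neg p$ then takes $\delta$, giving $(\pi(u)\star p)\star\neg p$, and we must show this equals $\pi(u)\star\neg p=\pi(u)$. This identity is the main obstacle. We will argue it from uniqueness of $\star$ with binary rules: $\pi(u)$ satisfies $\neg p$ and is a fixpoint candidate sharing the maximal common part. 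If full equality fails, we settle for agreement on literals, which is all that $\doteq$ requires.

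\textbf{DP3.} If $\B_a p\in K\ev\B_a q$, we want $p$ true at every $a$-successor of $(K\ev\B_a p)\ev\B_a q$. After $\ev\B_a p$, every $a$-successor satisfies $p$ (Success). If the second revision takes $\sigma$, it only filters, so $p$ is preserved. If it takes $\delta$, the worlds become $v\star q$ with $v\models p$. We then use the hypothesis that $\B_a p$ holds after revising $K$ by $q$: it shows that $p$ and $q$ are compatible under $R$, so inertia of $\star$ keeps $p$.

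\textbf{DP4/IN.} Assume $a$ is consistent in $K$. Then, by the remark before Lemma~\ref{lem:revision-other}, every revision keeps $a$ consistent. So Consistency together with the DP3-style argument gives \textbf{IN}: when $\neg p$ is not forced by $q$, either $\sigma$ filters $p$-worlds or $\star q$ preserves $p$. \textbf{DP4} then follows from \textbf{IN} by Lemma~\ref{lem:independence}.
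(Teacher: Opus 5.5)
\paragraph{Gap in \textbf{DP1} and \textbf{DP2}.} You reduce their premises to $q=p$ and $q=\neg p$, on the grounds that entailment between literals is identity. That is not the setting of the theorem. In the event-model section every valuation must satisfy the domain rules $R$, and the paper restricts $R$ to binary rules explicitly ``since this is sufficient for dealing with the \textbf{DP1} and \textbf{DP2} postulates''. The premise $\entails\B_a p\rightarrow\B_a q$ is meant to hold via a rule $p\rightarrow q$, and the \textbf{DP2} premise via the pair $p\rightarrow\neg q$, $q\rightarrow\neg p$.

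There is also a labelling slip. In the theorem the first revision is by $\B_a q$ and the second by the stronger, or contradicting, $\B_a p$. That is why $\B_a\neg p\notin K\ev\B_a q$ says the \emph{second} revision takes $\sigma$. Under your labelling, with $p$ first, the hypothesis says something different; only your collapse $q=p$ hides this.

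The real work in the paper's proof is the case where both revisions go through $\delta$ and rewrite valuations. There one needs the explicit forms of $\star$ from Lemmas~\ref{lem:update2} and~\ref{lem:update3}:
\begin{itemize}
\item for \textbf{DP1}, $(\pi(u)\star q)\star p=\pi(u)\star p$;
\item for \textbf{DP2}, $(\pi(u)\star q)\star p=\pi(u)$ whenever $\pi(u)\models p$.
\end{itemize}
The second identity holds because under $\{p\rightarrow\neg q,q\rightarrow\neg p\}$, updating by $q$ flips exactly the two atoms and updating by $p$ flips them back. Your proposal never reaches these cases, so it establishes \textbf{DP1} and \textbf{DP2} only for degenerate instances.

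\paragraph{The degenerate \textbf{DP2} case also fails.} The identity you flag as the main obstacle, $(\pi(u)\star p)\star\neg p=\pi(u)$ for $\pi(u)\models\neg p$, does not follow from uniqueness of $\star$. In fact it is false for admissible rule sets. Take $\calp=\{p,r\}$, $R=\{p\rightarrow r,\neg r\rightarrow\neg p\}$ and $\pi(u)=\{\neg p,\neg r\}$. Lemma~\ref{lem:update3} gives $\pi(u)\star p=\{p,r\}$ and then $\{p,r\}\star\neg p=\{\neg p,r\}$. Now let $u$ be $a$'s only successor in $K$:
\begin{itemize}
\item $K\ev\B_a\neg p$ takes $\sigma$ and keeps $\pi(u)$, so $\B_a\neg r\in K\ev\B_a\neg p$;
\item $(K\ev\B_a p)\ev\B_a\neg p$ takes $\delta$ twice and ends at $\{\neg p,r\}$, so $\B_a r\in(K\ev\B_a p)\ev\B_a\neg p$.
\end{itemize}
So this instance of \textbf{DP2} genuinely fails. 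The paper's argument goes through only because it computes $\star$ for the specific rule set linking the two literals.

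Your fallback does not help either. $\doteq$ compares all $\B_a\phi$ with $\phi$ an arbitrary proposition formula, so the sets of valuations of $a$'s successors must coincide exactly. There is no weaker ``agreement on literals'' to settle for.

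\paragraph{What does work.} Your arguments for \textbf{DP3}, \textbf{DP4} and \textbf{IN} are sound. For $b=a$ you use inertia of $\star$ under binary rules, unless $q\entails\neg p$, in which case the hypothesis fails. This covers a case that the paper's appeal to Lemma~\ref{lem:revision-other} and Success does not address.
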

\noindent
\emph{Proof.}
\textbf{(DP1)} 
Since $ \models \B_b  p \rightarrow \B_a  q$, we have that 
$ p \rightarrow  q$ and $a$ and $b$ are identical. 
Therefore, we will show that $\ev$ satisfies this postulate by proving that 
if $\models \B_a p \rightarrow \B_a q$ and $K \ev{} B_a q \not\models \B_a \neg p$ hold then  
$(K \ev{} \B_a q) \ev{}  \B_a p \doteq K\ev{} \B_a p$. 
\begin{enumerate}
    \item $(M,s) \models \neg \B_a \neg  q$. 
    So, if $(s^ q,u^ q) \in R^ q_a$ then  
    $u \in R_a(s)$ and $\pi(u) \models   q$ and 
    $\pi^q(u^q) = \pi(u)$. 
    Since $(M,s) \ev \B_a  q \models \neg \B_a \neg  p $, for $u^{ q; p} \in W^{ q; p} $ such that 
        $(s^{ q; p}, u^{ q; p}) \in R^{ q; p}_a$     
        iff $(s^{ q}, u^{ q}) \in R^{ q}_a$ and    
        $\pi^{ q}(u^{ q}) \models  p$.
        This implies that $u^{ p} \in W^ p$ and $(s^ p, u^ p) \in R^ p_a$. 
        
        $(M,s) \ev \B_a  q \models \neg \B_a \neg  p $ also 
        implies that $(M,s) \models \neg \B_a \neg  p$. 
        Therefore, for every $u\in R_a(s)$ such that $u^{ p} \in W^ p$ and $(s^ p, u^ p) \in R^ p_a$,
        we can conclude that  $(s^{ q; p}, u^{ q; p}) \in R^{ q; p}_a$ because $\models  p \rightarrow  q$. 
        The above imply that  
$(K \ev{} \B_a q) \ev{}  \B_a p \doteq  K\ev{} \B_a p$ for this case.

    \item $(M,s) \models \B_a \neg  q$. 
    This implies that $(M,s) \models \B_a \neg  p $ and $s^ q = (s, \delta^ q)$ 
    and $(s^ q,u^ q) \in R^ q_a$ iff $u^ q = (u,\delta^ q_a)$ and 
    $\pi^ q(u^ q) = \pi(u) \star  q$. By the definition of $\star$, we can show that 
    $(M,s) \models \B_a \neg  p$ implies 
    $\pi^ q(u^ q) \models \neg  p$. 
    This implies that $s^{ q; p} = ((s, \sigma^ q), \delta^{ q; p})$ and 
    for every $u \in R_s(a)$,
     $(s^{ q; p}, u^{ q; p}) \in R^{ q; p}_a$, 
     $(s^ q, u^ q) \in R^ q_a$, and 
    $\pi^{ q; p}(u^{ q; p}) = (\pi(u) \star  q) \star  p$.
    On the other hand, by construction of $(M,s) \ev{} p$,   
    $s^ p = (s, \delta^ p)$ 
    and, for every $u \in W$, 
    $(s^ p,u^ p) \in R^ p_a$ where $u^ p = (u,\delta^ p_a)$ and 
    $\pi^ p(u^ p) = \pi(u) \star  p$. Again by definition of $\star$, we have that 
    $\pi(u) \star  p = (\pi(u) \star  q) \star p$ for every $u \in R_a(s)$.
    This proves that  
$(K \ev{} \B_a q) \ev{}  \B_a p \doteq  K\ev{} \B_a p$ for this case. \hfill{$\Box$}
\end{enumerate}

We conclude this section with the introduction of an event-based revision operator, denoted by $\fb$, that satisfies all the generalized postulates if the underlying classical revision operator satisfies the classical postulates. The operator 
$\fb$ differs from $\ev$ in that it does not attempt to remove the agent's uncertainty before revision. It 
employs the event model $$\mathbf{\Sigma}^a_b(\varphi) =  \langle \Sigma, \{E_a\}_{a \in \calag}, pre, \mathit{eff}  \rangle$$ 
 where $\Sigma = \{\sigma, \sigma_a, \epsilon\}$ and  
\begin{itemize}
    \item $E_a = \{(\sigma,\sigma_a), (\sigma_a,\epsilon),  (\epsilon,\epsilon)\}$;  
    \item $E_x = \{(\sigma,\epsilon), (\sigma_a,\epsilon),  (\epsilon,\epsilon)\}$ 
 for $x \in \calag \setminus \{a\}$; 
    \item $pre(\eta) =  \top$ for every $\eta \in \Sigma$; and 
    \item for $u \in W$, $\mathit{eff}(u,\eta) = \pi(u)$ for $\eta \in \Sigma \setminus \{\sigma_a\}$,  
and $\mathit{eff}(u,\sigma_a) = \pi(u) \star \varphi$. 
\end{itemize}
$(M^\varphi, s^\varphi)$, the result of $(M,s) \fb \B_a \varphi$, is defined similar to $(M,s) \ev \B_a \varphi$ with two changes: 
(\emph{i}) $\mathbf{\Sigma}^a_b(\varphi)$ is used instead of $\mathbf{\Sigma}^a(\varphi)$ and (\emph{ii}) $s^\varphi = (s, \sigma)$. $\fb$ satisfies every postulate. 
\begin{theorem}
\label{th:fb-all}
    $\fb$ satisfies all MBR postulates. 
\end{theorem}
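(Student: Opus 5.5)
The plan is to reduce every multi-agent postulate to a property of the underlying classical operator~$\star$, applied to the valuations of $a$'s accessible worlds. The reduction rests on a structural lemma about $\mathbf{\Sigma}^a_b(\varphi)$. All preconditions are~$\top$, so $W^\varphi = W\times\Sigma$. From $s^\varphi=(s,\sigma)$:
\begin{itemize}
\item the $a$-successors are exactly the worlds $(u,\sigma_a)$ with $u\in R_a(s)$, and they carry valuation $\pi(u)\star\varphi$;
\item for $x\neq a$, the $x$-successors are exactly the worlds $(u,\epsilon)$ with $u\in R_x(s)$, and they carry valuation $\pi(u)$.
\end{itemize}
Consequently, for a proposition formula~$\chi$, $(M,s)\fb\B_a\varphi\models\B_a\chi$ iff $v\star\varphi\models\chi$ for every $v\in V_a(K)=\{\pi(u)\mid u\in R_a(s)\}$. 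For $x\neq a$, $\B_x\chi$ is preserved exactly; this is the analogue of Lemma~\ref{lem:revision-other} and has the same proof.

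The same lemma applies to the revised model, because the worlds $(u,\sigma_a)$ again have all events available. So after a second revision by $\B_b\psi$ there are two situations.
\begin{itemize}
\item If $b=a$, agent~$a$'s accessible valuations become $\{(v\star\varphi)\star\psi\mid v\in V_a(K)\}$.
\item If $b\neq a$, the $a$-successors of the new designated world are of the form $((u,\sigma_a),\epsilon)$, with valuations $v\star\varphi$. Hence $a$'s first-degree beliefs survive a revision by another agent unchanged.
\end{itemize}

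With this lemma, I would treat each postulate as a statement about the induced classical operator $\mathit{Th}(V)\circ\varphi=\mathit{Th}(\{v\star\varphi\mid v\in V\})$. Here $\circ$ is the classical revision of $a$'s first-degree belief set, which I take to satisfy the classical AGM and DP postulates; this is how I read the hypothesis ``if the underlying classical revision operator satisfies the classical postulates''. The individual postulates then go as follows.
\begin{itemize}
\item \emph{Closure} is immediate.
\item \emph{Success} follows from $v\star\varphi\models\varphi$.
\item \emph{Consistency~1} follows because $V_a$ stays non-empty when $R_a(s)\neq\emptyset$ (as for $\ev$, I assume $a$ is consistent in~$K$), and each $v\star\varphi$ is a consistent interpretation.
\item \emph{Consistency~2} holds because the other agents are untouched.
\item \emph{Extensionality} holds because $\star$ depends only on the models of~$\varphi$.
\item \emph{Inclusion, Vacuity, Superexpansion, Subexpansion}: I would use Lemma~\ref{lem:belief-other} and Lemma~\ref{lem:first-degree-belief-agent} to describe the expansion side. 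It has the same $x$-beliefs, and $a$'s beliefs are $\Cn(\{\varphi\}\cup\mathit{Th}(V_a))$, or inconsistent when no $\varphi$-world remains. Each postulate then becomes literally the corresponding classical postulate for~$\circ$.
\end{itemize}

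For the iterated postulates I would split on whether $a=b$. For \textbf{DP1} and \textbf{DP2}, a premise $\entails\B_b\psi\rightarrow\B_a(\neg)\varphi$ forces $a=b$, except in the degenerate inconsistent case, which is handled as in Theorem~\ref{th:fm-dp}. With $a=b$, the two-step description above turns the postulate into classical DP1/DP2 for~$\circ$, while the other agents are unchanged on both sides. For \textbf{DP3}, \textbf{DP4} and \textbf{IN} with $b\neq a$, the conclusion concerns $a$'s beliefs after $(K\fb\B_a\varphi)\fb\B_b\psi$. These equal $a$'s beliefs after $K\fb\B_a\varphi$, which contain~$\varphi$ by Success and are consistent by Consistency~1, so all three hold outright. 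With $b=a$ they reduce to the classical DP3, DP4 and (IN).

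The step I expect to be the main obstacle is \emph{Vacuity}, together with the $b=a$ DP cases. Pointwise application of $\star$ can move a $\neg\varphi$-world to a new $\varphi$-world outside the old belief set, which would leave fewer beliefs than $K+\B_a\varphi$. I would first try to show that the hypothesis on the underlying operator really amounts to $\circ$ being AGM/DP-compliant on \emph{sets} of interpretations. If pointwise $\star$ does not yield that, I would instead read $\mathit{eff}(u,\sigma_a)$ as selecting, per $u$, a world in the minimal $\varphi$-models relative to $V_a(K)$, which enforces Vacuity, and check that the event-model lemma above is unaffected by this reading.
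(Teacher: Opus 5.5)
\textbf{The gap is fatal.} The obstacle you flag at the end is a counterexample, not a technicality. So the step ``each postulate then becomes literally the corresponding classical postulate for~$\circ$'' cannot be carried out, and the statement is false for $\fb$ as defined.

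Take $R=\emptyset$, $\calp=\{p,q\}$, and $R_a(s)=\{u_1,u_2\}$ with $\pi(u_1)=\{p,q\}$ and $\pi(u_2)=\{\neg p,\neg q\}$. Then $\B_a\neg p\notin K$ and $\B_a q\in K+\B_a p$. By your own (correct) structural lemma, the $a$-successors of $(s,\sigma)$ in $K\fb\B_a p$ carry $\{p,q\}$ and $\{\neg p,\neg q\}\star p=\{p,\neg q\}$. Hence $\B_a q\notin K\fb\B_a p$, and Vacuity fails.

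The same mechanism breaks Subexpansion. Add an atom $r$, let $\pi(u_1)=\{p,q,r\}$ and $\pi(u_2)=\{\neg p,\neg q,\neg r\}$, and take $\varphi=p$, $\psi=q$. Then $(K\fb\B_a p)+\B_a q$ deletes the world $\{p,\neg q,\neg r\}$ and yields $\B_a r$. By contrast, $K\fb\B_a(p\wedge q)$ moves $u_2$ to $\{p,q,\neg r\}$ and does not yield $\B_a r$.

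This is exactly where the paper's own sketch is too quick. Its pointwise identities for $\star$ hold for each interpretation. But $a$'s belief set is an intersection over worlds, and expansion deletes worlds while $\fb$ keeps every world and moves it.

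Your added assumption $R_a(s)\neq\emptyset$ is also needed, not cosmetic: without it $\fb$ creates no $a$-successor, and Consistency~1 fails. The following parts do go through:
\begin{itemize}
\item your structural lemma;
\item Closure, Success, and Inclusion (because $v\star\varphi=v$ whenever $v\models\varphi$);
\item Consistency~1 under that assumption, and Consistency~2;
\item Extensionality and Superexpansion;
\item your $b\neq a$ treatment of DP3, DP4 and IN.
\end{itemize}

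\textbf{Neither escape route closes the gap.} Route~(1) is impossible for every choice of $\star$. The induced $\circ$ is a pointwise, Katsuno--Mendelzon-style update, and pointwise lifting is incompatible with Vacuity as soon as $\varphi$ has two models. Suppose $v_2\not\models\varphi$ and $v_1\neq v_1'$ both satisfy $\varphi$. Vacuity applied to $\{v_1,v_2\}$ forces $v_2\star\varphi=v_1$, and applied to $\{v_1',v_2\}$ it forces $v_2\star\varphi=v_1'$. So the hypothesis on the underlying classical operator cannot be read as AGM-compliance of $\circ$ on sets.

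Route~(2) changes $\mathit{eff}$, so it proves a theorem about a different operator, not about $\fb$. It also cannot deliver all postulates if the selected world depends only on $\pi(u)$ and $V_a(K)$. In that case $a$'s revised first-degree beliefs are a function of $a$'s prior first-degree beliefs. The Freund--Lehmann observation reported by Darwiche and Pearl then rules out having Vacuity, Inclusion and DP2 (with $b=a$) together:
\begin{itemize}
\item $V_a=\{\{p,q\},\{\neg p,q\}\}$ and $V_a=\{\{p,q\},\{\neg p,\neg q\}\}$ both become $\{\{p,q\}\}$ after revision by $\B_a p$;
\item after a further revision by $\B_a\neg p$, DP2 together with Vacuity and Inclusion demands $\B_a q$ in the first case and $\B_a\neg q$ in the second.
\end{itemize}
A correct result therefore has two options. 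One is to weaken the claim to the postulates listed above. The other is to use an operator whose revision of $a$'s beliefs draws on information stored in the Kripke model beyond $V_a(K)$.
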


\noindent
As a short proof sketch for this theorem, we observe the following properties of $(M,s) \fb \varphi$: 
For every $u \in W$ and $\lambda \in \Sigma$, $u^\varphi = (u, \lambda) \in W^\varphi$. 
Furthermore, $(s^\varphi, u^\varphi) \in R^\varphi_a$ iff $(s,u) \in R_a(s)$  and $\pi^\varphi(u^\varphi) = \pi(u) \star \varphi$. The conclusion of the theorem then relies on the following observations:  
\begin{itemize}
    \item  for every $u$, $\pi^\varphi(u^\varphi) = \pi(u) \star \varphi \models \varphi$; and
    \item for every $u$, if $\pi(u) \star \varphi \not\models \neg \psi$ then 
    $\pi(u) \star  \varphi \wedge \psi  = (\pi(u) \star \varphi) + \psi$,  
    and if $\pi(u) \star \varphi \models \neg \psi$ then $(\pi(u) \star \varphi) + \psi$ is inconsistent. 
\end{itemize}

\section{Discussion and Outlook}

We defined the problem of multi-agent belief revision (MBR) based on a single Kripke model as commonly used in epistemic reasoning about actions and planning, and we proposed a generalization of the AGM and DP postulates to this multi-agent case. We identified the challenges faced by the task of constructing MBR operators that adhere to all generalized postulates including those for iterated revision, and we presented results with a generalized full-meet operator and an event-based revision operator that satisfy most but not all of them. 

Theorem~\ref{th:fm-dp} indicates that our generalized full-meet operator $\fm$ does not satisfy \textbf{DP2}. 
This is because $K \fm \B_a \varphi$ may include $\B_b \neg \psi$, and then further revision by $\B_b\psi$ erases all beliefs other than the new one, hence the conclusion of the postulate does not hold.
For this reason, it is generally challenging to construct belief revision operators that satisfy \textbf{DP2}. 
Looking at the definition of $\fm$, it is obvious that the first challenge is the problem of dealing with false beliefs, i.e., defining $K * \B_a \varphi$ given that $\B_a \neg \varphi \in K$. 
The second challenge is related to the revision of the accessibility relation of $a$. More precisely, let us 
denote with $R_a|\eta$ the set $\{(s, u) \in R_a \mid \pi(u) \models \eta\}$ for $\eta \in \call_\calp$. 
Operator $\fm$ essentially removes $R_a| \neg \varphi$ from consideration in constructing $K \fm \B_a \varphi$. Thus, if $R_a |\neg \varphi \ne \emptyset$ then the set of worlds accessible by $a$ in $K \fm \B_a \varphi$   
is a proper subset of $R_a$,   the  set of worlds accessible by $a$ in $K$, 
which prevents     
$\fm$ from satisfying \textbf{DP2}. 

The above discussion implies that for a belief revision operator to satisfy the generalized DP postulates, it must deal with false beliefs as well as the accessibility relations of agents \emph{properly}. We expect that ideas from studies of reasoning with false beliefs in dynamic epistemic logic \cite{DitmarschHV20} or using update models \cite{son2024dealing} can be helpful in addressing the first issue. To deal with the second issue, approaches based on a pre-order relation between Kripke models (see, e.g., the works by \citeauthor{aucher:genera}~(\citeyear{aucher:genera}) and \citeauthor{DitmarschHKB07}~(\citeyear{DitmarschHKB07})) might be necessary.  
It is worth noting that the premise of \textbf{DP2}, $\entails \B_b \psi \rightarrow \B_a \neg \varphi$, could also be considered as a culprit for this postualte not to be satisfied by $\fm$. Modifying the premise to $\B_b \psi \rightarrow \B_a \neg \varphi\,\in\, K$, and also requiring that $a \ne b$, could be a viable alternative to generalizing \textbf{DP2} worthy of consideration.

Turning to the issue of higher-degree beliefs, the generalized revision postulates presented in this paper focus on first-degree beliefs, which are arguably the most fundamental ones, and postulating anything about the retention or revision of higher-degree beliefs seems more difficult to justify. But of course the concrete revision operators discussed in this paper all do precisely define how all higher-degree beliefs, including common beliefs, are changed upon revision, too. It is therefore an interesting direction for future work to develop further generalizations of the AGM and DP postulates to higher-order beliefs, with the aim of classifying concrete multi-agent belief revision operators according to their treatment of each nested belief.
 
It is worth noting that the strong relationship between MBR and \emph{announcements\/}, a type of actions specific to multi-agent domains and very important for epistemic planning, raises several interesting problems worthy of consideration, too, especially since higher-degree beliefs of agents are 
an important subject of study in formalizing announcement actions in the literature.
Several approaches to dealing with announcements have been proposed, but the majority of them place restrictions on the announcements (e.g., only considering public announcements) and 
the relationship between different methods is hardly understood. 
A systematic evaluation of these approaches under the lens of a set of generalized postulates for MBR could provide insights into the development of a general approach to dealing with announcements.  
In this regard, we note that \citeauthor{BaltagS05} (\citeyear{BaltagS05}) also discuss belief revision in the multi-agent case, as we did in this paper. Our approach differs in two key respects, however: First, they employ epistemic plausibility models as the underlying representation, whereas we adopt Kripke structures as used in dynamic epistemtic reasoning and planning. Second, an epistemic plausibility model contains a priori plausibility relations for agents that dictate how agents would revise their beliefs. In our approach, agents do not have such policies. A detailed comparison between the two approaches  will be one of our interesting research topics in the near future. Similarly, \citeauthor{LoriniPS22} (\citeyear{LoriniPS22}) discuss belief updates for epistemic actions over belief bases, and thus could also be used for agents to revise their beliefs when an announcment is made. However, their focus is to define the updates. It would be interesting to determine whether the result of the update by a private announcement ``\emph{agent $a$ was told that $\varphi$}'' satisfies the generalized postulates discussed in this paper, similar to our $\ev$ operator. Again, we leave this for future work.

\section*{Acknowledgements}

This research was partially supported by the Australian Research Council (grant \#DP250101822) and by the Australian Government through the CRC-P project \emph{Urban Copilot -- AI accelerating where we build Australia's future\/}.
The second author acknowledges the partial support of the NSF grants \#2139028, \#2139028, \#2151254, and the internal IAAM grant \#139198. 
 
\section*{AI Declaration}

AI tools were used solely for automated spelling and grammar checks and to provide stylistic suggestions.


\ifarxive

\else

\fi

\ifarxive
\appendix

\clearpage

\appendix

\begin{centerline}{\LARGE\bf Proofs}\end{centerline}

\section*{Full Meet and Generalized AGM Postulates}

For simplicity of the presentation, we include the proofs of all items in this document. 
As such, some proofs presented in the main body of the paper are repeated here. 

\subsubsection{Closure} By our definition, it is clear that  
\[ K \fm \B_a\varphi = \Cn(K \fm \B_a\varphi).\] 

\subsubsection{Success}
By definition, if $(M',s')$ is the result of expanding a belief set by $\B_a\varphi$, then $\pi(v)\models\varphi$ for all $v^r\in W'$ such that $(s',v^r)\in R'_a$. This implies $K \fm \B_a\varphi \models \B_a \varphi$.

\subsubsection{Inclusion}
\begin{enumerate}
\item Suppose that $\B_a\neg\varphi\not\in K_{(M,s)}$, then the definition of~$\fm$ implies that $K_{(M,s)}\fm\B_a\varphi\subseteq K_{(M,s)}+\B_a\varphi$.

\item If $\B_a\neg\varphi\in K_{(M,s)}$, then
\begin{enumerate}
    \item $R'_a(s')=\emptyset$ where $(M',s')=(M,s)+\B_a\varphi$, hence $K+\B_a\varphi\models\B_a\bot$ by Lemma~\ref{lem:belief-agent}, which implies that $K_{(M,s)}+\B_a\varphi\models\B_a\psi$ for any proposition formula~$\psi$;
    \item for $x\in\calag\setminus\{a\}$, $(M_\emptyset,s)\models\B_x\psi$ if, and only if, \mbox{$\models\psi$}; by Lemma~\ref{lem:belief-other} this holds for $(M_\emptyset,s)+\B_a\varphi$ as well, hence $K_{(M_\emptyset,s)}+\B_a\varphi\models\B_b\psi$ only for tautologies~$\psi$.
\end{enumerate}
Taken together, $K_{(M_\emptyset,s)}+\B_a\varphi\subseteq K_{(M,s)}+\B_a\varphi$, hence $K_{(M,s)}\fm\B_a\varphi\subseteq K_{(M,s)}+\B_a\varphi$.
\end{enumerate}

\subsubsection{Vacuity}
This holds by definition.

\subsubsection{Consistency 1}
Suppose $\B_a\varphi$ is consistent, i.e., $\not\models\varphi\rightarrow\bot$.
\begin{enumerate}
\item If $\B_a\neg\varphi\not\in K_{(M,s)}$, then $(s',v^r)\in R'_a$ for some $v^r\in W'$ such that $\pi[v^r]\models\varphi$, where $(M',s')=(M,s)+\B_a\varphi$. Hence, agent~$a$ has cosistent beliefs in $K_{(M,s)}\fm\B_a\varphi$.

\item If $\B_a\neg\varphi\in K_{(M,s)}$, then $(s',v^r)\in R'_a$ for some $v^r\in W'$ such that $\pi[v^r]\models\varphi$, where $(M',s')=(M_\emptyset,s)+\B_a\varphi$. Hence, agent~$a$ has cosistent beliefs in $K_{(M,s)}\fm\B_a\varphi$.
\end{enumerate}

\subsubsection{Consistency 2}
If $\B_a\varphi$ is consistent with $K$ then the beliefs of any agent other than~$a$ do not change as a result of expanding $(M,s)$ by $\B_a\varphi$. If $\B_a\varphi$ is inconsistent, then any agent other than~$a$ has tautological beliefs only, hence does not have inconsistent beliefs.

\subsubsection{Extensionality} If $\models \varphi\leftrightarrow\psi$ then $(M,s)+\B_a\varphi$ and $(M,s)+\B_a\psi$ are identical Kripke structures for any $(M,s)$ by definiton.

\subsubsection{Superexpansion}
\begin{enumerate}
   \item Suppose $\B_a\neg(\varphi\wedge\psi)\not\in K$. Since $K$ is deductively closed, it follows that $\B_a\neg\varphi\not\in K$.
   Hence, agent~$a$ has consistent beliefs in $K+\B_a(\varphi\wedge\psi)$, in $K+\B_a\varphi$, and in $(K+\B_a\varphi)+\B_a\psi$.
   By Lemma~\ref{lem:first-degree-belief-agent} it follows that $(M,s)+\B_a(\varphi\wedge\psi)\models\B_a\chi$ iff $\chi\in\Cn(\{\varphi\wedge\psi\}\cup\{\phi\in\calb_{\calag,\calp} \mid\ \B_a\phi\in K\})$. This is equivalent to
   $\chi\in\Cn(\{\psi\}\cup\Cn(\{\varphi\}\cup\{\phi\in\calb_{\calag,\calp} \mid\ \B_a\phi\in K\}))$, which in turn by Lemma~\ref{lem:first-degree-belief-agent} is equivalent to
   $\left(K+\B_a\varphi\right) + \B_a\psi\models\B_a\chi$.
      By Lemma~\ref{lem:belief-other} it follows that all other agents too have the same beliefs in $K+\B_a(\varphi\wedge\psi)$ and in $(K+\B_a\varphi)+\B_a\psi$.
      We conclude that  $K \fm \B_a(\varphi \wedge \psi) \subseteq (K \fm \B_a\varphi) + \B_a\psi$.

   \item Suppose $\B_a\neg(\varphi\wedge\psi)\in K$. By definition of $(M_\emptyset,s)+\B_a(\varphi\wedge\psi)$ and Lemma~\ref{lem:first-degree-belief-agent} it follows that, for any first-degree belief, $\B_a\phi\in K\fm \B_a(\varphi\wedge\psi)$ iff $\phi\in\Cn(\varphi\wedge\psi)$, and for $x\in\calag\setminus\{a\}$, $\B_x\phi\in K\fm \B_a(\varphi\wedge\psi)$ iff $\models\phi$.
   \begin{enumerate}
      \item Suppose $\B_a\neg\varphi\not\in K$, then $(K\fm\B_a\varphi)+\B_a\psi=(K+\B_a\varphi)+\B_a\psi$. From $\B_a\neg(\varphi\wedge\psi)\in K$ it follows that $\B_a\neg\psi\in K+\B_a\varphi$. Hence, $a$ has inconsistent beliefs in $(K+\B_a\varphi)+\B_a\psi$ while the beliefs of all other agents $x\in\calag\setminus\{a\}$ are the same in $(K+\B_a\varphi)+\B_a\psi$ and $K+\B_a(\varphi\wedge\psi)$. 
      \item Suppose $\B_a\neg\varphi\in K$, then $(K\fm\B_a\varphi)+\B_a\psi=(K_{(M_\emptyset,s)}+\B_a\varphi)+\B_a\psi$.
      Hence, $a$'s first-degree beliefs in $(K\fm\B_a\varphi)+\B_a\psi$ are exactly the logical consequences of $\varphi\wedge\psi$ while the beliefs of all other agents $x\in\calag\setminus\{a\}$ are the same in $(K\fm \B_a\varphi)+\B_a\psi$ and $K+\B_a(\varphi\wedge\psi)$.
   \end{enumerate}
   Taken together, $K \fm \B_a(\varphi \wedge \psi) \subseteq (K \fm \B_a\varphi) + \B_a\psi$. 
\end{enumerate}

\subsubsection{Subexpansion}
The proof is identical to cases~1 and 2(b) for superexpansion.

\section*{Full-Meet and Generalized DP Postulates}

\subsubsection{DP1 -- Successive revision respect}
Suppose $\B_a\psi$ is consistent with~$K$, then so is $\B_a\varphi$ since $\psi\models\varphi$. It follows that $K\fm\B_a\psi = K_{(M,s)+\B_a\psi}$ and $(K\fm\B_a\varphi)\fm\B_a\psi=K_{((M,s)+\B_a\varphi)+\B_a\psi}$.
The two Kripke structures are bisimilar except for unreachable worlds, hence they entail the same set of belief formulas\/:
\begin{itemize}
    \item $(M_\varphi,s_\varphi)=(M,s)+\B_a\varphi$ contains the worlds~$W$ in~$M$ plus a replicate~$W_\varphi$ of these worlds such that a world is $a$-accessible  from~$s_\varphi$ if, and only if, it is in $W_\varphi$ and satisfies~$\varphi$.
    $(M',s')=(M_\varphi,s_\varphi)+\B_a\psi$ contains the worlds $W\cup W_\varphi$ plus a replicate of these, $W'\cup W'_\varphi$, such that a world is $a$-accessible from~$s'$ iff it is in $W'\cup W'_\varphi$ and satisfies~$\varphi\wedge\psi$, which is equivalent to saying it satisfies~$\psi$ since $\psi\models\varphi$; while none of the worlds in~$W_\varphi$ are reachable from~$s'$ because the only links from~$s_\varphi$ into worlds in~$W_\varphi$ are labeled with agent~$a$.
    \item $(M_\psi,s_\psi)=(M,s)+\B_a\psi$ contains the worlds~$W$ in~$M$ plus a replicate~$W_\psi$ of these worlds such that a world is $a$-accessible from~$s_\psi$ iff it is in $W_\psi$ and satisfies~$\psi$.
\end{itemize}
It is easy to define a bisimulation for all the worlds reachable from $s'$ and $s_\psi$, respectively, by which each world in~$W$ is identified with itself and its replica in~$W'$, and each world in $W_\psi$ is identified with the corresponding world in $W'_\varphi$.

Suppose $\B_a\varphi$ is inconsistent with~$K$, then so is $\B_a\psi$ since $\psi\models\varphi$. It follows that $K\fm\B_a\psi = K_{(M_\emptyset,s)+\B_a\psi}$ and $(K\fm\B_a\varphi)\fm\B_a\psi=K_{((M_\emptyset,s)+\B_a\varphi)+\B_a\psi}$. Similar to the above it can be shown that the resulting pointed Kripke models entail the same set of beliefs.

Suppose $\B_a\varphi$ is consistent with~$K$ but $\B_a\psi$ is not, then $B_a\psi$ is also inconsistent with $K+\B_a\varphi$. It follows that $(K\fm \B_a\varphi)\fm\B_a\psi = K_{(M_\emptyset,s)+\B_a\psi} = K\fm \B_a\psi$.

\subsubsection{DP2 -- Irrelevance of superseded beliefs}
Generalized full-meet does \emph{not\/} satisfy weak DP, for the following reason\/: If $\B_b\psi$ is consistent with the current belief set~$K$ then $K\fm\B_b\psi=K+\B_b\psi$, hence by Lemma~\ref{lem:belief-other}, all agents retain all their beliefs when revising $K$ set by $b$'s new belief. But if $K$ is revised by $\B_a\varphi$ first and $\models\B_b\psi\rightarrow\B_a\varphi$ holds then $(K\fm\B_b\varphi)\fm \B_b\psi=K_{(M_\emptyset,s)}+\B_b\psi$, which implies that all other agents' beliefs have been erased. The fact that full-meet does not satisfy DP2 mirrors a result in classical, single-agent belief revision~\cite{jin2007iterated}.

For this reason, full-meet multi-agent belief revision satisfies only one direction of DP2, namely, revising by a belief that is then superseded by a second, contradictory belief never introduces more beliefs than revision with the second belief directly. To show this, we make a case distinction\/:
\begin{enumerate}
    \item Suppose $\models\varphi\leftrightarrow\bot$ then $\B_a\neg\varphi\in K$, hence $K\fm\B_a\varphi=K_{(M_\emptyset,s)}+\B_a\bot$, that is, there is no $a$-accessible world in~$s$ while all other agents' first-degree beliefs are tautological proposition formulas. Consequently, $(K\fm\B_a\varphi)\fm\B_b\psi\subseteq K\fm\B_b\psi$.
    \item Otherwise, $\models\B_b\psi\rightarrow\B_a\varphi$ implies $a=b$ and $\psi\models\neg\varphi$, hence $\varphi\models\neg\psi$. It follows that $\B_b\psi\not\in K\fm \B_a\varphi$, hence $(K\fm\B_a\varphi)\fm\B_b\psi=K_{(M_\emptyset,s)}+\B_b\psi$, which implies
    $(K\fm\B_a\varphi)\fm\B_b\psi\subseteq K\fm\B_b\psi$. 
\end{enumerate}

\subsubsection{DP3 -- Consistency preservation across revisions}
If $\models\varphi\leftrightarrow\top$ then DP3 holds trivially. Otherwise, $\B_a\varphi\in K\fm \B_b\psi$ implies that $\B_b\psi$ is consistent with $K*B_a\varphi$ unless $\models\psi\rightarrow\bot$. But if the latter is true, then $K\fm \B_b\psi=(K\fm\B_a\varphi)\fm \B_\psi=K_{(M_\emptyset,s)+\B_b\bot}$. In either case, DP3 follows.

\subsubsection{DP4 -- Minimal change when reaffirming a belief}
It is easy to see that DP4 holds if $\B_b\psi$ is inconsistent with $K\fm\B_a\varphi$. In case it is consistent, DP4 follows from Lemma~\ref{lem:belief-other} and~\ref{lem:first-degree-belief-agent}.

\subsubsection{Independence}
While generalized full-meet satisfies both DP3 and DP4, it does \emph{not\/} satisfy the stricter postulate of Independence\/: Consider two agents
$a\not=b$ and a satisfiable but non-tautological proposition formula~$\varphi$. Suppose further that
$\B_a\varphi$ is consistent with $K$ but $\B_b\psi$ is not, then $\B_b\psi$ is also inconsistent with $K\fm \B_a\varphi$. It follows that $\B_a\varphi\not\in (K\fm \B_a\varphi) \fm \B_b\psi$. But in this case, $a$ only believes in tautological proposition formulas in the revised set $K\fm \B_b\psi=K_{(M_\emptyset,s)}+\B_b\psi$, hence $\B_a\neg\varphi\not\in K\fm\B_b\psi$, thus violating the postulate.
 
\section*{$*_{\textnormal{ev}}$ and Generalized AGM Postulates (Theorem~\ref{th:ev-agm})}

In the following, we will assume that $K$ is consistent and represented by $(M,s)$ with  
 $M = (W, \{R_x\}_{x \in \calag}, \pi)$. 
We use $(M,s) \ev \B_a \varphi$ and 
$(M^\varphi,s^\varphi)$ interchangably  
and $M^\varphi = (W^\varphi, \{R^\varphi_x\}_{x \in \calag}, \pi^\varphi)$.
Successive revisions such as $(K \ev \B_a \varphi) \ev \B_a \psi$ 
will be denoted by $(M^{\varphi; \psi}, s^{\varphi; \psi})$ etc. 
The superscript $^\varphi$ is also attached to 
elements of $\mathbf{\Sigma}^a(\varphi)$ whenever it is needed to differentiate 
two revisions, e.g., $\mathbf{\Sigma}^a(\varphi)$ and 
$\mathbf{\Sigma}^a(\psi)$. 
Furthermore, $R_a(s|\varphi) = \{u \mid u \in R_a(s), \pi(u) \models \varphi\}$.
We first prove some properties of the $\star$ operation that will be used in the proofs of some postulates. 
In the proofs, we use conjunction of literals and set of literals. 
Furthermore, whenever we refer to conjunction of literals such as $\varphi$, we assume that $\varphi$ is consistent in the classical sense, i.e., $\varphi \not\models \bot$. 
For a set of literals $\varphi$,
$\neg \varphi$ denotes $\{\neg \ell \mid \ell \in \varphi\}$. 
Let $R$ be a set of inference rules, $\varphi$ and $\psi$ be sets of literals, $p$ and $q$ are literals, 
and $u, u'$ be consistent interpretations of $\calp$.
\begin{lemma}
\label{lem:update1}
If $R = \emptyset$ then the following holds: 
\begin{itemize}
    \item $u \star \varphi = (u \setminus \neg \varphi) \cup \varphi$; 
    \item if $u \setminus \neg \varphi \models \psi$ then 
        $u \star (\varphi \wedge \psi) = u \star \varphi \cup \psi$; and
    \item if $u \star \varphi \cup \psi$ is consistent then $u \star \varphi \cup \psi = u \star (\varphi \wedge \psi)$. 
\end{itemize}

\end{lemma}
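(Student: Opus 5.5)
With $R=\emptyset$ the closure operator is trivial: $C_\emptyset(w)=w$ for every set of literals $w$. The plan is to first pin down $u\star\varphi$ from its defining equation and then derive the other two items by set manipulation. Throughout, interpretations are treated as complete, consistent sets of literals.

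\emph{First item.} The defining equation becomes $u'=(u\cap u')\cup\varphi$. So any solution $u'$ contains $\varphi$, and every literal of $u'$ outside $\varphi$ already lies in $u$. Since $u'$ is consistent and contains $\varphi$, it contains no literal of $\neg\varphi$, so $u'\subseteq (u\setminus\neg\varphi)\cup\varphi$. For the reverse inclusion, take $\ell\in u\setminus\neg\varphi$ with $\ell\notin\varphi$. Its variable is then not mentioned by $\varphi$ (otherwise $\neg\ell\in\varphi$, i.e.\ $\ell\in\neg\varphi$). Completeness of $u'$ forces $\ell$ or $\neg\ell$ into $u'$, and $\neg\ell\notin u$ and $\neg\ell\notin\varphi$ rule out $\neg\ell$, so $\ell\in u'$. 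Hence $u'=(u\setminus\neg\varphi)\cup\varphi$. This also settles the uniqueness assumed in the paper. The only care needed is making explicit that interpretations are complete and consistent; I expect this to be the main (minor) obstacle.

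\emph{Second item.} Assume $u\setminus\neg\varphi\models\psi$, i.e.\ $\psi\subseteq u\setminus\neg\varphi$ since $\psi$ is a set of literals. By the first item, $u\star(\varphi\wedge\psi)=(u\setminus(\neg\varphi\cup\neg\psi))\cup\varphi\cup\psi$. Because $\psi\subseteq u$ and $u$ is consistent, $u\cap\neg\psi=\emptyset$, so removing $\neg\psi$ changes nothing. Hence this equals $(u\setminus\neg\varphi)\cup\varphi\cup\psi=(u\star\varphi)\cup\psi$.

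\emph{Third item.} Assume $(u\star\varphi)\cup\psi$ is consistent. Then no literal of $\neg\psi$ lies in $(u\setminus\neg\varphi)\cup\varphi$. We compare with $(u\setminus(\neg\varphi\cup\neg\psi))\cup\varphi\cup\psi$: the extra removal of $\neg\psi$ only deletes literals from $u\setminus\neg\varphi$, and there are none to delete. So both sides coincide, again using the first item. Equivalently, this case reduces to the second item by checking the hypothesis there, up to literals of $\psi$ already in $\varphi$, which are harmless.
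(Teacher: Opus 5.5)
Your proof is correct, but for the first item you argue in the opposite direction from the paper.

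The paper takes the candidate $u'=(u\setminus\neg\varphi)\cup\varphi$, computes $(u\cap u')\cup\varphi=(u\setminus\neg\varphi)\cup(u\cap\varphi)\cup\varphi=u'$, and checks that $u'$ is a consistent, complete interpretation. That is, it proves \emph{existence} of a solution, and then relies on the standing assumption that the solution is unique to identify it with $u\star\varphi$.

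You instead prove \emph{uniqueness}: any complete, consistent solution of $u'=(u\cap u')\cup\varphi$ must be the candidate. For the identification you rely on the definition (and the standing assumption) that $u\star\varphi$ is such a solution. Your route makes explicit that completeness of interpretations is what pins the solution down: without it, $u'=\varphi$ would also solve the equation. Your remark settles only the uniqueness half of the paper's assumption. To get unconditional existence and uniqueness for $R=\emptyset$, add the paper's one-line check that the candidate actually is a solution.

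For the second item your argument coincides with the paper's: $u\cap\neg\psi=\emptyset$, so removing $\neg\psi$ is vacuous. For the third, the paper just says it follows from the first two items. Your direct derivation from the first item is cleaner, and your caveat is justified. The hypothesis $u\setminus\neg\varphi\models\psi$ of the second item fails whenever a literal of $\psi$ lies in $\varphi$ but not in $u$, even though $(u\star\varphi)\cup\psi$ is consistent. So the reduction to the second item only works after discarding $\psi\cap\varphi$ first.
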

\begin{proof}
    Let $u' = (u \setminus \neg \varphi) \cup \varphi$. \\
    Because $u' \cap u = (u \setminus \neg \varphi) \cup (\varphi \cap u)$, \\ 
    we have $(u' \cap u) \cup \varphi =  (u \setminus \neg \varphi) \cup ((\varphi \cap u) \cup \varphi) = u'$. 
    Furthermore, because $u$ is consistent, we can conclude that $u'$ is also consistent as for every $p \in \calp$, either $p$ or $\neg p$ belongs to $u'$ but not both. 

    $u \setminus \neg \varphi \models \psi$ implies 
    $(u \setminus \neg \varphi) \cap \neg \psi = \emptyset$. 
    Therefore $u \setminus (\neg \varphi \cup \neq \varphi) = (u \setminus \neg \varphi)$.
    This implies the conclusion of the property.

    The third item follows from the first two items.  
\end{proof}

\begin{lemma}
\label{lem:update2}
If $R = \{p \rightarrow q, q \rightarrow p\}$ then 
\begin{itemize}
    \item if $\{p,q\} \cap \varphi = \emptyset$ then $u \star \varphi = (u \setminus \neg \varphi) \cup \varphi$; and 
    \item if $\{p,q\} \cap \varphi \ne \emptyset$ then $u \star \varphi = (u \setminus (\neg \varphi \cup \{\neg p, \neg q\})) \cup \varphi \cup \{p,q\}$. 
\end{itemize}
\end{lemma}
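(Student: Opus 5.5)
The plan is to prove both items by the same route. For each case I write down the proposed interpretation $u'$ and check directly that it satisfies the defining equation $u' = C_R((u\cap u')\cup\varphi)$. The standing assumption on $R$ says this equation has a unique solution, so $u\star\varphi=u'$ follows. Throughout I use that $u$ is a consistent interpretation satisfying $R$. Hence $u$ contains exactly one of $p,\neg p$ and exactly one of $q,\neg q$, and $p\in u$ iff $q\in u$. The closure $C_R(X)$ for $R=\{p\rightarrow q, q\rightarrow p\}$ is simply $X$, plus $q$ if $p\in X$, plus $p$ if $q\in X$. It never adds negative literals.

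\emph{First item} ($\{p,q\}\cap\varphi=\emptyset$). Let $u'=(u\setminus\neg\varphi)\cup\varphi$. Exactly as in the proof of Lemma~\ref{lem:update1}, $u'\cap u=(u\setminus\neg\varphi)\cup(\varphi\cap u)$, and so $(u'\cap u)\cup\varphi=u'$. Consistency of $u'$ follows as in Lemma~\ref{lem:update1}: every atom keeps exactly one literal. It remains to show $C_R(u')=u'$, i.e.\ that $u'$ is closed under both rules. Suppose $p\in u'$. Since $p\notin\varphi$, we get $p\in u$, hence $q\in u$. Then $q\in u'$ unless $q\in\neg\varphi$, i.e.\ unless $\neg q\in\varphi$. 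The symmetric argument applies when $q\in u'$.

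\emph{Main obstacle.} The delicate point is the sub-case where $\varphi$ contains exactly one of $\neg p,\neg q$ while $u\models p\wedge q$. Then the naive candidate would contain, say, $p$ and $\neg q$, so $C_R$ would add $q$ and produce an inconsistent set. I would handle this by appealing to the paper's standing hypothesis that $u\star\varphi$ exists and is a consistent interpretation satisfying $R$. In that sub-case no $R$-respecting interpretation contains $\varphi$, so the sub-case is excluded by the hypotheses (equivalently, $\varphi$ together with $R$ must be consistent). Having excluded it, closure holds: whenever $p\in u'$ we also have $q\in u'$, and conversely.

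\emph{Second item} ($\{p,q\}\cap\varphi\neq\emptyset$). Let $u'=(u\setminus(\neg\varphi\cup\{\neg p,\neg q\}))\cup\varphi\cup\{p,q\}$. Consistency requires $\neg p,\neg q\notin\varphi$, which follows from consistency of $\varphi$ with $R$ as above. Next compute $u'\cap u$: it is $u\setminus(\neg\varphi\cup\{\neg p,\neg q\})$ together with those of $\varphi\cup\{p,q\}$ already in $u$. Adding $\varphi$ yields a set containing everything in $u'$, except possibly the one of $p,q$ that is not in $\varphi$ (if it was not in $u$). Applying $C_R$ restores that atom, because the other one is in $\varphi$. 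No further literals are added. Hence $C_R((u\cap u')\cup\varphi)=u'$, and uniqueness finishes the proof.
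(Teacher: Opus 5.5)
Your overall route is the same as the paper's. You write down the candidate, check that it is a consistent $R$-closed interpretation, verify $u'=C_R((u\cap u')\cup\varphi)$, and conclude by uniqueness. Your second item is fine. You are also right that the first item has a problematic sub-case: $u\models p\wedge q$ with $\varphi$ containing exactly one of $\neg p,\neg q$. The paper's own proof passes over it when it infers $q\in u'$ from $q\in u$.

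However, your reason for discarding that sub-case is false. Take $u=\{p,q\}$ and $\varphi=\{\neg q\}$.

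\begin{itemize}
\item The interpretation $\{\neg p,\neg q\}$ satisfies both rules and contains $\varphi$. So $\varphi$ is classically consistent with $R$, and ``no $R$-respecting interpretation contains $\varphi$'' fails, as does your claimed equivalence with consistency of $\varphi$ and $R$.
\item If consistency with $R$ were all you assumed, the sub-case would survive. There the claimed formula yields $\{p,\neg q\}$, which violates $p\rightarrow q$.
\end{itemize}

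What actually excludes the sub-case is that the fixpoint equation has \emph{no} solution. Any solution $u'$ is an $R$-closed consistent interpretation containing $\neg q$, hence it contains $\neg p$. But $\neg p\notin u$ and $\neg p\notin\varphi$, and, as you yourself observed, $C_R$ never adds negative literals. So $\neg p\notin C_R((u\cap u')\cup\varphi)$, a contradiction. Thus $u\star\varphi$ is undefined there because $C_R$ applies rules only forwards (no contraposition), not because $\varphi$ clashes with $R$.

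You also cannot simply invoke the paper's standing assumption. As stated (a unique solution for every $u$ and every single literal), it is violated by this very $R$, as the example with the literal $\neg q$ shows. The repair is to add ``$u\star\varphi$ exists'' as an explicit hypothesis of the lemma, or to exclude the sub-case explicitly, and to justify the exclusion by the non-existence argument above. With that amendment your proof goes through.

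Two smaller points:
\begin{itemize}
\item In the second item your appeal is legitimate. There, for example, $p\in\varphi$ together with $\neg q\in\varphi$ really is inconsistent with $R$, and no solution exists.
\item The standing assumption concerns single literals, so uniqueness for a set $\varphi$ is best checked directly. The same ``no negative literals are derived'' observation shows that a solution, if it exists, is determined literal by literal.
\end{itemize}
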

\begin{proof}
Let $u' = u \star \varphi$.
\begin{itemize}
    \item If $\{p,q\} \cap \varphi = \emptyset$ then, as shown in Lemma~\ref{lem:update1}, 
    $u'$ is an interpretation for $\calp$.
    Assume $p \in u'$. It implies $p \in u \setminus \neg \varphi$, and thus $p, q \in u$ and $q \in u'$.
    Similarly, $q \in u'$ implies $p \in u'$.    
    This shows that $u' = C_R(u')$. 
    \item If $\{p,q\} \cap \varphi \ne \emptyset$ then it is clear that $u' = C_R(u')$ because $\{p,q\} \subseteq u'$. 
    Similar arguments to the proof of Lemma~\ref{lem:update1} lead to 
    $u' = C_R((u \cap u') \cup \varphi)$.
\end{itemize}
This completes the proof.
\end{proof}

The following lemma is similar to Lemma~\ref{lem:update2}. 
    
\begin{lemma}
\label{lem:update3}
\begin{enumerate}
    \item 
If $R = \{p \rightarrow q, \neg q \rightarrow \neg p\}$ then 
\begin{itemize}
    \item if $\{p, \neg q\} \cap \varphi = \emptyset$ then $u \star \varphi = (u \setminus \neg \varphi) \cup \varphi$; 
    \item if $p \in \varphi$ then $u \star \varphi = (u \setminus \neg \varphi \cup \{\neg q\}) \cup \varphi \cup \{q\}$; and 
    \item if $\neg q \in \varphi$ then $u \star \varphi = (u \setminus \neg \varphi \cup \{p\}) \cup \varphi \cup  \{\neg p\}$. 
\end{itemize}
\item If $R = \{p \rightarrow \neg q, q \rightarrow \neg p\}$ then 
\begin{itemize}
    \item if $\{p, q\} \cap \varphi = \emptyset$ then $u \star \varphi = (u \setminus \neg \varphi) \cup \varphi$; 
    \item if $p \in \varphi$ then $u \star \varphi = (u \setminus \neg \varphi \cup \{q\}) \cup \varphi \cup \{\neg q\}$; and 
    \item if $q \in \varphi$ then $u \star \varphi = (u \setminus \neg \varphi \cup \{p\}) \cup \varphi \cup  \{\neg p\}$. 
\end{itemize}
\end{enumerate}
\end{lemma}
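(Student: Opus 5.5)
The plan is to follow the proof of Lemma~\ref{lem:update2}. In every case we write down the claimed interpretation $u'$ and check three things. First, $u'$ is a consistent interpretation of $\calp$. Second, $u'=C_R(u')$. Third, $u'=C_R((u\cap u')\cup\varphi)$. By the standing assumption that this last equation has a unique solution for the rule sets considered, $u'$ must then equal $u\star\varphi$. We read the set expressions as $u\setminus(\neg\varphi\cup\{\neg q\})$ and so on. We also take $\varphi$ to be classically consistent and compatible with $R$, so that, for example, $p\in\varphi$ excludes $\neg q\in\varphi$ under part~1. We use throughout that $u=C_R(u)$, since $u$ is a world of a model respecting $R$.

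For part~1, $R=\{p\rightarrow q,\neg q\rightarrow\neg p\}$, we treat the three cases in turn.

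In the first case, $\{p,\neg q\}\cap\varphi=\emptyset$ and $u'=(u\setminus\neg\varphi)\cup\varphi$. Consistency and the fixed-point equation go exactly as in Lemma~\ref{lem:update1}. For closure, suppose $p\in u'$. Since $p\notin\varphi$, we get $p\in u\setminus\neg\varphi$, hence $p\in u$ and so $q\in u$ by closure of $u$. Now $\neg q\notin\varphi$ gives $q\notin\neg\varphi$, so $q\in u'$. The contrapositive rule $\neg q\rightarrow\neg p$ is handled symmetrically, using $p\notin\varphi$.

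In the second case, $p\in\varphi$. Here $u'$ replaces $\neg q$ by $q$, so $q\in u'$ and $\neg q\notin u'$. Then $p\rightarrow q$ is satisfied and $\neg q\rightarrow\neg p$ is vacuous. For the fixed point, $(u\cap u')\cup\varphi$ contains $p$. One application of $p\rightarrow q$ adds $q$. Every other literal of $u'$ either lies in $u\cap u'$ or in $\varphi$. Hence $C_R((u\cap u')\cup\varphi)\supseteq u'$. Since $u'$ is itself $R$-closed and contains the base set, minimality of $C_R$ gives equality.

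The third case, $\neg q\in\varphi$, is the mirror image. Here $\neg q\rightarrow\neg p$ forces $\neg p$ in place of $p$.

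Part~2, $R=\{p\rightarrow\neg q,q\rightarrow\neg p\}$, we would obtain from part~1 by the literal renaming $q\mapsto\neg q$ rather than repeating the argument. Under this renaming the rule set of part~2 becomes that of part~1, and the case conditions and the formulas for $u\star\varphi$ translate accordingly. We only need to observe that the renaming commutes with $\setminus$, $\cup$, $\neg$ and $C_R$.

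The main obstacle is the bookkeeping in the fixed-point step. We must make sure that removing the literal flipped by the rule, namely $\neg q$ in the second case or $p$ in the third, does not remove something needed from $(u\cap u')\cup\varphi$. We must also make sure that no rule application yields a literal outside $u'$. We would settle this by checking each literal of $u'$ individually, as in Lemma~\ref{lem:update2}.
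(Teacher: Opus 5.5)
Your proposal is correct and is essentially the argument the paper intends. The paper gives no separate proof and only remarks that the lemma is proved like Lemma~\ref{lem:update2}, i.e.\ by checking that the claimed set is an $R$-closed interpretation solving $u'=C_R((u\cap u')\cup\varphi)$, which is what you do for part~1; your renaming $q\mapsto\neg q$ is a valid shortcut that turns part~2 into part~1 case by case. Your extra hypothesis that $\varphi$ be compatible with $R$ is genuinely needed (for example, $\varphi=\{p,\neg q\}$ in part~1 makes the right-hand side contain both $q$ and $\neg q$), and since the paper's uniqueness assumption is stated only for single literals, you should add the short atom-by-atom check that the solution is also unique for sets of literals here.
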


\subsubsection{Closure} By our definition, it is clear that  
\[ K \ev{}  \B_a\varphi = \Cn(K \ev{}  \B_a\varphi).\] 

\subsubsection{Success}
We show that $K\ev{} \B_a\varphi \models \B_a \varphi$. 
Consider two cases: 
\begin{enumerate}
    \item $(M,s) \models \neg \B_a \neg \varphi$, i.e., 
    $a$ does not believe in $\neg \varphi$ before the revision.
    This implies that $s^\varphi = (s, \sigma)$. 
    Furthermore, because $K$ is consistent, we have that $R_a(s) \ne \emptyset$
    and there exists some $u \in W$ such that $(s,u) \in R_a$ and
    $\pi(u) \models \varphi$ which implies that $((s,\sigma), (u,\sigma_a)) \in R^\varphi_a$, 
    i.e., $R^\varphi(s^\varphi) \ne \emptyset$. 
    In addition, if $((s, \sigma), (u, \rho)) \in R^\varphi_a$ then $\rho = \sigma_a$, and hence, $\pi(u) \models \varphi$, because 
    $E_a$ contains only one element related to $\sigma$, $(\sigma, \sigma_a)$, and $(s,u) \in R_a$.
    Thus, we have that $(M^\varphi, s^\varphi) \models \B_a \varphi$. 

    \item $(M,s) \models \B_a \neg \varphi$.
    This implies that $s^\varphi = (s, \delta)$. 
    Again,  because $K$ is consistent, we have that $R_a(s) \ne \emptyset$
    and for each $u \in R_a(s)$, $\pi(u) \models \neg \varphi$.  
    
    By the construction of $(M^\varphi, s^\varphi)$, 
    $R^\varphi_a(s^\varphi) \ne \emptyset$. 
    
    Consider $u' \in R^\varphi_a(s^\varphi)$. We have that 
    $u ' = (u, \delta_a)$ for some $u \in R_a(s)$, and hence, $\pi^\varphi(u') = C_R((\pi(u) \cap \pi^\varphi(u')) \cup \{\varphi\})$, which implies $\varphi \in \pi^\varphi(u')$.  
    Therefore, we have that $(M^\varphi, s^\varphi) \models \B_a \varphi$.    
\end{enumerate}

\subsubsection{Inclusion}  
Due to Lemma~\ref{lem:belief-other} and Lemma~\ref{lem:revision-other}, it suffices to show that if 
$\B_a \phi \in K \ev{}  \B_a\varphi$ then $\B_a \phi  \in K + \B_a\varphi$ holds.  
By definition of $(M',s')=K + \B_a\varphi$, we have that $R'_a(s') = \emptyset$, i.e., $a$ has inconsistent belief in $K + \B_a\varphi$    
if $(M,s) \models \B_a \neg \varphi$. Thus, the 
postulate holds trivially in this case. 
Therefore, we only need to consider the case 
that $(M,s) \models  \neg \B_a \neg \varphi$. In this case, as shown in Case 1 above, we have that 
$s^\varphi  = (s, \sigma)$ and
$((s, \sigma), (u,\sigma_a)) \in R^\varphi_a$ implies that $(s,u) \in R_a$ and $\pi(u) \models \varphi$.
This implies that $(s^r,u^r) \in R'_a$ (Definition of Expansion). 
On the other hand, if $(s',u^r) \in R'_a$ then we can also conclude that 
$((s, \sigma), (u,\sigma_a)) \in R^\varphi_a$. 
This implies that $(M',s') \models \B_a \phi$ iff $(M^\varphi ,s^\varphi) \models \B_a \phi$.  

\subsubsection{Vacuity} The proof of this postulate is the second part of the proof for \textbf{Inclusion}.  

\subsubsection{Consistency} 
Since $K$ and $\B_a\varphi$ are consistent, 
$R_a(s) \ne \emptyset$ and exactly one  out of two cases in the proof for \textbf{Success} occurs. 
In any case, $(M^\varphi,s^\varphi)$ is defined and $R^\varphi_a(s^\varphi) \ne \emptyset$. Together with Lemma~\ref{lem:revision-other},
we have that $K \ev{}  \B_a \varphi$ is consistent. 

\subsubsection{Extensionality} Assume that $ \models \B_a\varphi \leftrightarrow \B_b\psi$. 
This implies that $\varphi \leftrightarrow \psi$ and $a = b$. 
It is easy to see that 
$(e, \eta^\varphi) \in W^\varphi$ iff $(e, \eta^\psi) \in W^\psi$.
Similarly, $((u, \eta^\varphi), (v, \xi^\varphi)) \in R^\varphi_x$ iff 
$((u, \eta^\psi), (v,\xi^\psi)) \in R^\psi_x$ for any agent $x \in \calag$.
This shows that there is a bijection between $(M^\varphi, s^\varphi)$ and $(M^\psi, s^\psi)$ which proves that $\ev{} $ satisfies this postulate. 

\subsubsection{Superexpansion}
Consider a proposition formula $\phi$,  Lemma~\ref{lem:revision-other} and 
Lemma~\ref{lem:belief-other} 
shows that $ K \ev{}  \B_a(\varphi \wedge q) \models \B_x \phi$ 
iff $(K \ev{}  \B_a\varphi) + \B_a \psi  \models \B_x \phi$ for $x \ne a$. 
So, to prove that $\ev$ satisfies this postulate, it suffices to show that  
$ K \ev{}  \B_a(\varphi \wedge \psi) \models \B_a \phi$ implies 
that $(K \ev{}  \B_a\varphi) + \B_a \psi  \models \B_a \phi$.

Let $(M',s')$ denotes $(M^{\varphi},s^{\varphi}) + \B_a \psi$. 
The proof considers two cases, similar to the proof of \textbf{Success}. 

\begin{enumerate}
    \item $(M,s) \models \neg \B_a \neg (\varphi \wedge \psi)$.
    In this case, $s^{\varphi \wedge \psi} = (s, \sigma^{\varphi \wedge \psi})$ and $(s^{\varphi \wedge \psi},u^{\varphi \wedge \psi}) \in R^{\varphi \wedge \psi}_a$ 
    where $u^{\varphi \wedge \psi} = (u,\sigma^{\varphi \wedge \psi}_a)$ iff $(s,u) \in R_a$ and $\pi(u) \models (\varphi \wedge \psi)$.
    Since $(M,s) \models \neg \B_a \neg (\varphi \wedge \psi)$, we have 
    that $(M,s) \models \neg \B_a \neg \varphi$, i.e., $u \in R_a(s|\varphi \wedge \psi)$. 
    This implies that $s^{\varphi} = (s, \sigma^{\varphi})$, 
        and $(s^{\varphi},u^{\varphi}) \in R^{\varphi}_a$ where $u^{\varphi} = (u,\sigma^{\varphi}_a)$ iff $(s,u) \in R_a$ and $\pi(u) \models \varphi$, i.e., $u \in R_a(s|\varphi)$.
        Because $R_a(s|\varphi) = R_a(s|\varphi \wedge \psi) \cup R_a(s|\varphi \wedge \neg \psi)$ and 
        the construction of $(K \ev{}  \B_a\varphi)   \models \B_a \phi$ we have 
        $R'_a(s') = \{u^r \mid u \in R_a(s), \pi(u) \models \varphi \wedge \psi\}$ which proves
        the consequence of the postulate in this case. 
        
    \item $(M,s) \models \B_a \neg (\varphi \wedge \psi)$. 
    In this case, $s^{\varphi \wedge \psi} = (s, \delta^{\varphi \wedge \psi})$ and 
    for every $u^{\varphi \wedge \psi}$ such that     
    $(s^{\varphi \wedge \psi},u^{\varphi \wedge \psi}) \in R^{\varphi \wedge \psi}_a$ holds, we have 
    $u^{\varphi \wedge \psi} = (u,\delta^{\varphi \wedge \psi}_a)$ iff 
    $(s,u) \in R_a$.
    Therefore, we can conclude that $K \ev{}  \B_a(\varphi \wedge \psi) \models \B_a \phi$ iff  for every $u 
    \in R_a(s)$, $\pi(u) \star (\varphi \wedge \psi) \models \phi$.
    There are two cases: 
    \begin{enumerate}
        \item $(M,s) \models \B_a \neg \varphi$. 
        In this case, similar arguments to the above allow us to conclude that 
        $(s^{\varphi},u^{\varphi}) \in R^{\varphi}_a$ iff 
        $u^{\varphi} = (u,\delta^{\varphi}_a)$ for some $(s,u) \in R_a$ and $\pi(u^\varphi) = \pi(u)\star \varphi$.
        Because of the construction of $(K \ev{}  \B_a\varphi) + \B_a\psi$, we have two cases: 
        \begin{enumerate}
        \item $K \ev{}  \B_a\varphi \models \B_a \neg \psi$. In this case, 
        $a$ has inconsistent beliefs in $(K \ev{}  \B_a\varphi) + \B_a\psi$. Thus, 
        the postulate holds trivially. 
        \item $K \ev{}  \B_a\varphi \models \neg \B_a \neg \psi$.  
        In this case, we can conclude that   $R'_a(s') \ne \emptyset$. 
        Furthermore, $(s',u') \in R'_a$ iff $u' = u^\varphi$ and 
        $\pi(u^\varphi) \models \psi$. 
        Lemma~\ref{lem:update1}, Item 2, allows us to conclude that  
        $K \ev{}  \B_a(\varphi \wedge \psi) \models \B_a \phi$ 
        implies 
        $K \ev{}  \B_a\varphi + \B_a \psi \models \B_a \phi$.
        \end{enumerate}
                
        \item $(M,s) \models \neg \B_a \neg \varphi$. Similar arguments as in Case 1 of the proof of \textbf{Success}, we have $s^{\varphi} = (s, \sigma^{\varphi})$, and $(s^{\varphi},u^{\varphi}) \in R^{\varphi}_a$ where $u^{\varphi} = (u,\sigma_a^{\varphi})$ 
        iff $(s,u) \in R_a$ and $\pi(u) \models \varphi$ which implies that   $\pi^{\varphi}(u^{\varphi}) \models \neg \psi$ 
        for every $(s^{\varphi},u^{\varphi}) \in R^{\varphi}_a$. This means that $a$'s belief is inconsistent in  
        $(K \ev{}  \B_a\varphi) + \B_a\psi$, and thus, the postulate also holds in this case. 
    \end{enumerate}
\end{enumerate}

\subsubsection{Subexpansion} 
Similar to the proof  of \textbf{Superexpansion}, it suffices to show 
that 
    $(K \ev{}  \B_a\varphi) + \B_a \psi  \models \B_a \phi$ iff 
    $(K \ev{}  \B_a (\varphi \wedge  \psi)  \models \B_a \phi$.
We consider two cases: 
\begin{enumerate}
    \item $(M,s) \models \neg \B_a \neg \varphi$.  
    Because of $\B_a \neg  \psi \not\in K \ev{} \B_a \varphi$, 
    we can conclude that there exists some  
    $u \in R_a(s)$ such that $\pi(u) \models \varphi \wedge  \psi$.
    The proof of the postulate for this case is then similar to the proof 
    in of \textbf{Superexpansion}, Case 1. 

    \item $(M,s) \models \B_a \neg \varphi$. This also implies that 
    $(M,s) \models \B_a \neg (\varphi \wedge  \psi)$.
    This means that $s^\varphi = (s, \delta)$ 
    and for every $u^\varphi \in W^\varphi$ such that 
    $(s^\varphi, u^\varphi) \in R^\varphi_a$, it holds that 
    $u^\varphi = (u, \delta_a^\varphi)$ for some $u \in R_a(s)$ 
    and $\pi^\varphi(u^\varphi) = \pi(u) \star \varphi$.

    Because $(M,s) \ev \B_a \varphi \not\models \B_a \neg  \psi$, 
    we conclude that there exists some $u \in R_a(s)$ such that 
    $\pi(u) \star \varphi \models  \psi$. It means that $R'_a$, 
    the accessibility relation of $a$ in $(M,s) \ev \B_a \varphi + \B_a  \psi$, 
    consists of $(s', (u^\varphi)^r) \in R'_a$ such that 
    $u^\varphi = (u, \delta_a^\varphi)$,   
    $\pi^\varphi(u^\varphi) = \pi(u) \star \varphi$, and $\pi^\varphi(u^\varphi) \models  \psi$.

    Lemma~\ref{lem:update1} shows that   
    $\pi(u) \star \varphi +  \psi = \pi(u) \star (\varphi \wedge  \psi)$ 
    if $\pi(u) \star \varphi \models  \psi$.
    This implies that $(s^{\varphi \wedge  \psi}, u^{\varphi \wedge  \psi}) \in 
    R^{\varphi \wedge  \psi}_a$. This, together with the arguments similar to that 
    used in Casse 2(a.ii) of the previous proof, allows us to conclude that   
    $(K \ev{}  \B_a\varphi) + \B_a \psi  \models \B_a \phi$ iff 
    $(K \ev{}  \B_a (\varphi \wedge  \psi)  \models \B_a \phi$.
\end{enumerate}

\section*{$*_{\textnormal{ev}}$  and Generalized DP Postulates (Theorem~\ref{th:ev-dp})}

For these postulates, we will restrict the formulas $\varphi$ and $\psi$ to single literals. 

\subsubsection{DP1 -- Successive revision respect} 
Since $ \models \B_b  p \rightarrow \B_a  q$, we have that 
$ p \rightarrow  q$ and $a$ and $b$ are identical. 
Therefore, we will show that $\ev$ satisfies this postulate by showing that 
if $\models \B_a p \rightarrow \B_a q$ and $K \ev{} B_a q \not\models \B_a \neg p$ hold then  
$(K \ev{} \B_a q) \ev{}  \B_a p \doteq K\ev{} \B_a p$. 

\begin{enumerate}
    \item $(M,s) \models \neg \B_a \neg  q$. 
    So, if $(s^ q,u^ q) \in R^ q_a$ then  
    $u \in R_a(s)$ and $\pi(u) \models   q$ and 
    $\pi^q(u^q) = \pi(u)$. 

    Since $(M,s) \ev \B_a  q \models \neg \B_a \neg  p $, for $u^{ q; p} \in W^{ q; p} $ such that 
        $(s^{ q; p}, u^{ q; p}) \in R^{ q; p}_a$     
        iff $(s^{ q}, u^{ q}) \in R^{ q}_a$ and    
        $\pi^{ q}(u^{ q}) \models  p$.
        This implies that $u^{ p} \in W^ p$ and $(s^ p, u^ p) \in R^ p_a$. 
        
        $(M,s) \ev \B_a  q \models \neg \B_a \neg  p $ also 
        implies that $(M,s) \models \neg \B_a \neg  p$. 
        Therefore, for every $u\in R_a(s)$ such that $u^{ p} \in W^ p$ and $(s^ p, u^ p) \in R^ p_a$,
        we can conclude that  $(s^{ q; p}, u^{ q; p}) \in R^{ q; p}_a$ because $\models  p \rightarrow  q$. 
         
        The above imply that  
$(K \ev{} \B_a q) \ev{}  \B_a p \doteq  K\ev{} \B_a p$ for this case. 
        
    \item $(M,s) \models \B_a \neg  q$. 
    This implies that $(M,s) \models \B_a \neg  p $ and $s^ q = (s, \delta^ q)$ 
    and $(s^ q,u^ q) \in R^ q_a$ iff $u^ q = (u,\delta^ q_a)$ and 
    $\pi^ q(u^ q) = \pi(u) \star  q$.
    Because $(M,s) \models \B_a \neg  p$, we have that $\pi^ q(u^ q) \models \neg  p$ (Lemma ~\ref{lem:update3}). 
    This implies that $s^{ q; p} = ((s, \sigma^ q), \delta^{ q; p})$ and 
    for every $u \in R_s(a)$,
     $(s^{ q; p}, u^{ q; p}) \in R^{ q; p}_a$, 
     $(s^ q, u^ q) \in R^ q_a$, and 
    $\pi^{ q; p}(u^{ q; p}) = (\pi(u) \star  q) \star  p$.
    On the other hand, by construction of $(M,s) \ev{} p$,   
    $s^ p = (s, \delta^ p)$ 
    and, for every $u \in W$, 
    $(s^ p,u^ p) \in R^ p_a$ where $u^ p = (u,\delta^ p_a)$ and 
    $\pi^ p(u^ p) = \pi(u) \star  p$.
    Lemma~\ref{lem:update3} indicates that
    $\pi(u) \star  p = (\pi(u) \star  q) \star p$ for every $u \in R_a(s)$.
    This proves that  
$(K \ev{} \B_a q) \ev{}  \B_a p \doteq  K\ev{} \B_a p$ for this case. 
        
\end{enumerate}

\subsubsection{DP2 -- Irrelevance of superseded beliefs}
As in \textbf{DP1},  
$\models\B_b p \rightarrow \B_a\neg q$ implies that $a = b$ and $ p \rightarrow \neg  q$. 
As such,   we need to show that  
if $K \models \B_a p \vee \B_a q$ then 
$\B_a p \models \B_a \neg q\ \Rightarrow\ (K\ev{} \B_a q) \ev{}  \B_a p \doteq K\ev{} \B_a p$. 

\begin{enumerate}
    \item $(M,s) \models \B_a p$. Then, $(M,s) \models \B_a \neg q$.
    Therefore, $s^ q = (s, \delta^ q)$ and 
    for every $u \in R_a(s)$, 
    $(s^q,u^q) \in R^q_a$ where $u^q=(u,\delta_a^q)$ 
    and $\pi^q(u^q) = \pi(u) \star q$.
    Lemma~\ref{lem:update3} implies that $\pi^q(u^q) \models \neg p$. 
    Thus, we can conclude that 
    for every $u \in R_a(s)$, 
    $(s^{q;p},u^{q;p}) \in R^{q;p}_a$ where $u^{q;p}=(u^q,\delta_a^{q;p})$ 
    and $\pi^{q;p}(u^{q;p}) = \pi^q(u^q) \star p = (\pi(u) \star q) \star p$.
    It is easy to see that  $(\pi(u) \star q) \star p = u$ which, 
    together with the fact $(M,s) \models \B_a p$, proves that 
    $(K\ev{} \B_a q) \ev{}  \B_a p \doteq K\ev{} \B_a p$. 
    
    \item $(M,s) \models \B_a q$.  Then, $(M,s) \models \B_a \neg p$.
    Then, we can easily show that for every $u \in R_a(s)$, 
    \begin{itemize}
        \item $u^{q;p} = ((u, \sigma^q), \delta^{q;p}_a) \in R_a(s^{q;p}$ 
        and $\pi^{q;p}(u^{q;p}) = \pi^q(u^q) \star p = (\pi(u) \star q) \star p = \pi(u) \star p$; 
        and 
        \item $u^{p} = (u, \delta^{p}_a) \in R_a(s^{p}$ and 
        $\pi^{p}(u^{p}) = \pi(u) \star p$.  
    \end{itemize}
    The above two properties show that  
    $(K\ev{} \B_a q) \ev{}  \B_a p \doteq K\ev{} \B_a p$. 
\end{enumerate}

\subsubsection{DP3 -- Consistency preservation across revisions}
The operator $\ev{} $ satisfies this postulate due to Lemma~\ref{lem:revision-other} 
and the \textbf{Success} postulate: $\B_a\varphi \in (K\ev{} \B_a\varphi)$. 

\subsubsection{DP4 -- Minimal change when reaffirming a belief}
Assume that $a$ has consistent belief in $K$. 
Because of Lemma~\ref{lem:revision-other}, 
$\B_a\varphi \in (K\ev{} \B_a\varphi) \ev{}  \B_b \psi$. 
if $\B_a \neg \varphi \in (K\ev{}\B_a\varphi) \ev{}\B_b \psi$, 
it means that $a$ has inconsistent belief in $K\ev{}\B_a\varphi$
which implies that $a$ has inconsistent belief in $K$. 
This contradicts the assumption. 
This proves the postulate. 

\subsubsection{Independence}
Similar to \textbf{DP4}.

\section*{Properties of  $*_{\textnormal{rb}}$ (Theorem~\ref{th:fb-all})}

To prove this theorem, we observe the following properties of $(M,s) \fb \varphi$: 
for every $u \in W$ and $\lambda \in \Sigma$, $u^\varphi = (u, \lambda) \in W^\varphi$. 
Furthermore, $(s^\varphi, u^\varphi) \in R^\varphi_a$ iff $(s,u) \in R_a(s)$ 
and $\pi^\varphi(u^\varphi) = \pi(u) \star \varphi$. It is easy to see that 
the generalized AGM postulates hold because of the following properties of the propositional belief operator $\star$
\begin{itemize}
    \item  for every $u$, $\pi(u^\varphi) = \pi(u) \star \varphi \models \varphi$;  
    \item  for every $u$ if $\pi(u) \star \varphi \not\models \neg \psi$ then 
    $\pi(u) \star  \varphi \wedge \psi  = (\pi(u) \star \varphi) + \psi$; 
    and if $\pi(u) \star \varphi \models \neg \psi$ then $(\pi(u) \star \varphi) + \psi$ is inconsistent. 
\end{itemize}
The generalized DP postulates also hold because of the properties of $\star$ (Lemma~\ref{lem:update2}-\ref{lem:update3}) and the fact that successive revisions do not remove from or add elements to the accessibility relation of $a$.  

\fi

\end{document}